\newcommand\notsotiny{\@setfontsize\notsotiny\@vipt\@viipt}
\newcommand{\removelatexerror}{\let\@latex@error\@gobble}
\crefname{algocf}{alg.}{algs.}
\Crefname{algocf}{Algorithm}{Algorithms}
\definecolor{alg}{RGB}{46, 149, 186}
\definecolor{urldarkblue}{RGB}{1, 111, 255}
\DeclareMathOperator*{\argmin}{\arg\!\min}
\DeclarePairedDelimiterX{\doubleVertBar}[2]{[}{]}{%
  #1\;\delimsize\|\;#2%
}
\renewcommand{\H}[2][]{
\ifthenelse {\equal{#1}{}}
{\mathbb{H}\left[#2\right]}
{\mathbb{H}_{#1}\left[#2\right]}}
\newcommand{\E}[2][]{
\ifthenelse {\equal{#1}{}}
{\mathbb{E}\left[#2\right]}
{\mathbb{E}_{#1}\left[#2\right]}}
\newcommand{\norm}[1]{\left\lVert#1\right\rVert}
\def\1{\bm{1}}
\def\RR{\mathbb{R}}
\def\eps{{\epsilon}}
\def\vmu{{\bm{\mu}}}
\def\vtheta{{\bm{\theta}}}
\def\va{{\bm{a}}}
\def\vb{{\bm{b}}}
\def\vd{{\bm{d}}}
\def\ve{{\bm{e}}}
\def\vm{{\bm{m}}}
\def\vn{{\bm{n}}}
\def\vp{{\bm{p}}}
\def\vq{{\bm{q}}}
\def\vr{{\bm{r}}}
\def\vs{{\bm{s}}}
\def\vu{{\bm{u}}}
\def\vv{{\bm{v}}}
\def\vw{{\bm{w}}}
\def\vx{{\bm{x}}}
\def\vy{{\bm{y}}}
\def\vmu{{\boldsymbol{\mu}}}
\def\vtheta{{\boldsymbol{\theta}}}
\def\vtau{{\boldsymbol{\tau}}}
\def\vSigma{{\boldsymbol{\Sigma}}}
\def\mA{{\bm{A}}}
\def\mB{{\bm{B}}}
\def\mC{{\bm{C}}}
\def\mD{{\bm{D}}}
\def\mF{{\bm{F}}}
\def\mH{{\bm{H}}}
\def\mI{{\bm{I}}}
\def\mK{{\bm{K}}}
\def\mM{{\bm{M}}}
\def\mP{{\bm{P}}}
\def\mQ{{\bm{Q}}}
\def\mR{{\bm{R}}}
\def\mS{{\bm{S}}}
\def\mT{{\bm{T}}}
\def\mU{{\bm{U}}}
\def\mW{{\bm{W}}}
\def\mX{{\bm{X}}}
\DeclareMathAlphabet{\mathsfit}{\encodingdefault}{\sfdefault}{m}{sl}
\SetMathAlphabet{\mathsfit}{bold}{\encodingdefault}{\sfdefault}{bx}{n}
\def\gD{{\mathcal{D}}}
\def\gF{{\mathcal{F}}}
\def\gP{{\mathcal{P}}}
\def\gT{{\mathcal{T}}}
\def\gV{{\mathcal{V}}}
\def\sR{{\mathbb{R}}}
\newcommand{\defi}{\;{:=}\;}
\newcommand{\StatexIndent}[1][3]{%
  \setlength\@tempdima{\algorithmicindent}%
  \Statex\hskip\dimexpr#1\@tempdima\relax}
\newtheorem{theorem}{Theorem}
\newtheorem{proposition}{Proposition}
\newtheorem{lemma}{Lemma}
\newtheorem{definition}{Definition}
\newtheorem{assumption}{Assumption}
\newtheorem{remark}{Remark}
\newacronym{poe}{PoE}{product of experts}
\newacronym{kld}{KL divergence}{Kullback–Leibler Divergence}
\newacronym{kl}{KL}{Kullback–Leibler}
\newacronym{map}{MAP}{maximum a Posteriori}
\newacronym{rmp}{RMP}{Riemannian Motion Policies}
\newacronym{gmm}{GMM}{Gaussian Mixture Model}
\newacronym{mpc}{MPC}{Model Predictive Control}
\newacronym{gp}{GP}{Gaussian Process}
\newacronym{ot}{OT}{Optimal Transport}
\newacronym{smp}{SMP}{Sinkhorn Motion Planning}
\newacronym{dof}{DoF}{degrees of freedom}
\newacronym{sdf}{SDF}{Signed Distance Field}
\newacronym{otmp}{MPOT}{Motion Planning via Optimal Transport}
\newacronym{chomp}{CHOMP}{Covariant Hamiltonian Optimization for Motion Planning}
\newacronym{gpmp}{GPMP}{Gaussian Process Motion Planning}
\newacronym{stomp}{STOMP}{Stochastic Trajectory Optimization for Motion Planning}
\newacronym{sgpmp}{SGPMP}{Stochastic Gaussian Process Motion Planning}
\newacronym{bps}{BPS}{Batch Polytope Search}
\newcommand{\OT}{\mathrm{OT}}
\newcommand{\OTlambda}{\mathrm{OT}_\lambda}
\title{Accelerating Motion Planning via Optimal Transport}
\author{An T. Le$^{1}$, Georgia Chalvatzaki$^{1, 3, 5}$, Armin Biess, Jan Peters$^{1-4}$ \\
$^{1}$Department of Computer Science, Technische Universitat Darmstadt, Germany \\
$^{2}$German Research Center for AI (DFKI) \quad $^{3}$Hessian.AI  \quad $^{4}$Centre for Cognitive Science \\
$^{5}$Center for Mind, Brain and Behavior, Uni. Marburg and JLU Giessen, Germany \\
}%
\begin{document}

\maketitle

\begin{abstract}
Motion planning is still an open problem for many disciplines, e.g., robotics, autonomous driving, due to their need for high computational resources that hinder real-time, efficient decision-making. A class of methods striving to provide smooth solutions is gradient-based trajectory optimization. However, those methods usually suffer from bad local minima, while for many settings, they may be inapplicable due to the absence of easy-to-access gradients of the optimization objectives. In response to these issues, we introduce Motion Planning via Optimal Transport (MPOT)---a \textit{gradient-free} method that optimizes a batch of smooth trajectories over highly nonlinear costs, even for high-dimensional tasks, while imposing smoothness through a Gaussian Process dynamics prior via the planning-as-inference perspective. To facilitate batch trajectory optimization, we introduce an original zero-order and highly-parallelizable update rule----the Sinkhorn Step, which uses the regular polytope family for its search directions. Each regular polytope, centered on trajectory waypoints, serves as a local cost-probing neighborhood, acting as a \textit{trust region} where the Sinkhorn Step ``transports'' local waypoints toward low-cost regions. We theoretically show that Sinkhorn Step guides the optimizing parameters toward local minima regions of non-convex objective functions. We then show the efficiency of MPOT in a range of problems from low-dimensional point-mass navigation to high-dimensional whole-body robot motion planning, evincing its superiority compared to popular motion planners, paving the way for new applications of optimal transport in motion planning.
\end{abstract}

\vspace{-4mm}
\section{Introduction}  \label{sec:intro}
\vspace{-2mm}
Motion planning is a fundamental problem for various domains, spanning robotics~\cite{laumond1998robot}, autonomous driving~\cite{claussmann2019review}, space-satellite swarm~\cite{izzo2007autonomous}, protein docking~\cite{al2012motion}. etc., aiming to find feasible, smooth, and collision-free paths from start-to-goal configurations. Motion planning has been studied both as sampling-based search~\cite{kavraki1996probabilistic,kuffner2000rrt} and as an optimization problem~\cite{ratliff2009chomp,mukadam2018continuous,kalakrishnan2011stomp}. Both approaches have to deal with the complexity of high-dimensional configuration spaces, e.g., when considering high-\gls{dof} robots, the multi-modality of objectives due to multiple constraints at both configuration and task space, and the requirement for smooth trajectories that low-level controllers can effectively execute.
Sampling-based methods sample the high-dimensional manifold of configurations and use different search techniques to find a feasible and optimal path~\cite{kuffner2000rrt, karaman2011sampling, kavraki1996probabilistic}, but suffer from the complex sampling process and the need for large computational budgets to provide a solution, which increases w.r.t. the complexity of the problem (e.g., highly redundant robots and narrow passages)~\cite{park2014high}. Optimization-based approaches work on a trajectory level, optimizing initial trajectory samples either via covariant gradient descent~\cite{ratliff2009chomp,dragan2011manipulation} or through probabilistic inference~\cite{kalakrishnan2011stomp, mukadam2018continuous, urain2022learning}. Nevertheless, as with every optimization pipeline, trajectory optimization depends on initialization and can get trapped in bad local minima due to the non-convexity of complex objectives. Moreover, in some problem settings, objective gradients are unavailable or expensive to compute. Indeed, trajectory optimization is difficult to tune and is often avoided in favor of sampling-based methods with probabilistic completeness. We refer to \cref{app:related} for an extensive discussion of related works.
\\
To address these issues of trajectory optimization, we propose a zero-order, fast, and highly parallelizable update rule---the Sinkhorn Step. We apply this novel update rule in trajectory optimization, resulting in \textit{\gls{otmp}} -- a gradient-free trajectory optimization method optimizing a batch of smooth trajectories.~\gls{otmp} optimizes trajectories by solving a sequence of entropic-regularized \gls{ot} problems, where each~\gls{ot} instance is solved efficiently with the celebrated Sinkhorn-Knopp algorithm~\cite{sinkhorn1967concerning}. In particular,~\gls{otmp} discretizes the trajectories into waypoints and structurally probes a local neighborhood around each of them, which effectively exhibits a \textit{trust region}, where it ``transports'' local waypoints towards low-cost areas given the local cost approximated by the probing mechanism. Our method is simple and does not require computing gradients from cost functions propagating over long kinematics chains. Crucially, the planning-as-inference perspective~\cite{toussaint2009robot, urain2022learning} allows us to impose constraints related to transition dynamics as planning costs, additionally imposing smoothness through a \gls{gp} prior. Delegating complex constraints to the planning objective allows us to locally resolve trajectory update as an \gls{ot} problem at each iteration, updating the trajectory waypoints towards the local optima, thus effectively optimizing for complex cost functions formulated in configuration and task space. We also provide a preliminary theoretical analysis of the Sinkhorn Step, highlighting its core properties that allow optimizing trajectories toward local minima regions. 
\\
Further, our empirical evaluations on representative tasks with high-dimensionality and multimodal planning objectives demonstrate an increased benefit of \gls{otmp}, both in terms of planning time and success rate, compared to notable trajectory optimization methods. Moreover, we empirically demonstrate the convergence of \gls{otmp} in a 7-\gls{dof} robotic manipulation setting, showcasing a fast convergence of \gls{otmp}, reflected also in its dramatically reduced planning time w.r.t. baselines. The latter holds even for 36-dimensional, highly redundant mobile manipulation systems in long-horizon fetch and place tasks (cf.~\cref{fig:tiago_exp}).
\\
Our \textbf{contribution} is twofold. \textit{(i)} We propose the Sinkhorn Step - an efficient zero-order update rule for optimizing a batch of parameters, formulated as a barycentric projection of the current points to the polytope vertices. \textit{(ii)} We, then, apply the Sinkhorn Step to motion planning, resulting in a novel trajectory optimization method that optimizes a batch of trajectories by efficiently solving a sequence of linear programs. It treats every waypoint across trajectories equally, enabling fast batch updates of multiple trajectories-waypoints over multiple goals by solving a single \gls{ot} instance while retaining smoothness due to integrating the \gls{gp} prior as cost function.

\begin{figure}
    \centering
    \includegraphics[width=\textwidth]{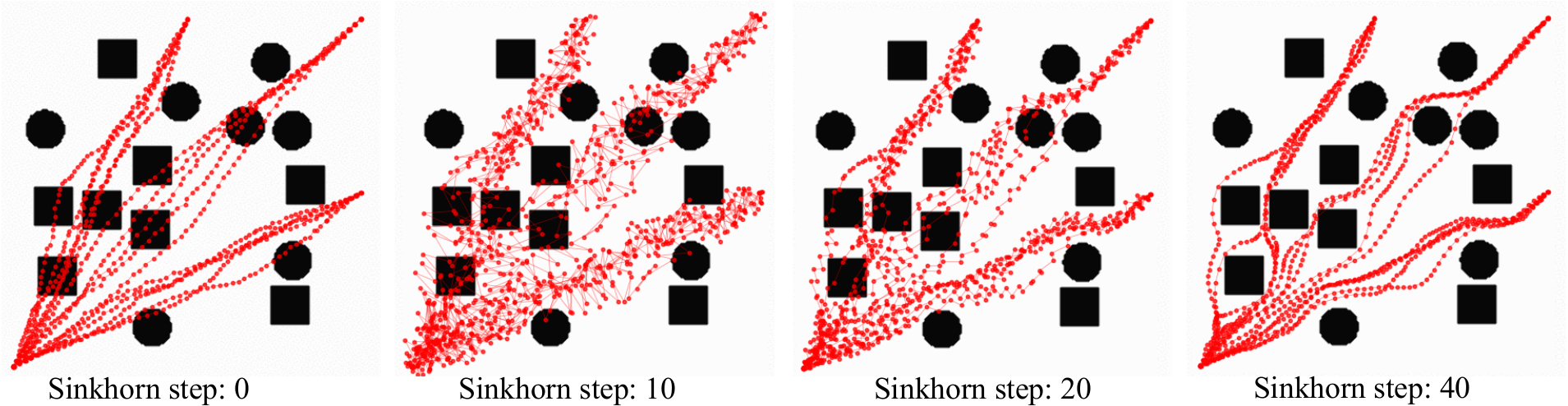}
    \caption{Example of \gls{otmp} in the multimodal planar navigation scenario with three different goals. For each goal, we sample five initial trajectories from a \gls{gp} prior. We illustrate four snapshots of our proposed Sinkhorn Step that updates a batch of waypoints from multiple trajectories over multiple goals. For this example, the \textbf{total planning time was $\bf0.12$s}. More demos can be found on \url{https://sites.google.com/view/sinkhorn-step/}}
    \label{fig:otmp_planar}
    \vspace{-0.4cm}
\end{figure}

\vspace{-2mm}
\section{Preliminaries}  \label{sec:prelim}
\vspace{-2mm}

\paragraph{Entropic-regularized optimal transport.}
We briefly introduce discrete \gls{ot}. For a thorough introduction, we refer to~\cite{villani2009optimal, peyre2019computational, figalli2021invitation}.
\\
\textbf{Notation.} Throughout the paper, we consider the optimization on a $d$-dimensional Euclidean space $\sR^d$, representing the parameter space (e.g., a system state space). $\mathbf{1}_d$ is the vector of ones in $\sR^d$. The scalar product for vectors and matrices is $x, y \in \sR^d$, $\langle x, y\rangle = \sum_{i=1}^d x_i y_i$; and  $\mA, \mB \in \sR^{d \times d}$,  $\langle \mA, \mB\rangle = \sum_{i,j=1}^d \mA_{ij} \mB_{ij}$, respectively. $\norm{\cdot}$ is the $l_2$-norm, and $\norm{\cdot}_{\mM}$ denotes the Mahalanobis norm w.r.t. some positive definite matrix $\mM \succ 0$. For two histograms $\vn \in \Sigma_n$ and $\vm \in \Sigma_m$ in the simplex $\Sigma_d \defi \{\vx \in \RR^d_+: \vx^\intercal \mathbf{1}_d=1\}$, we define the set
$
  U(\vn, \vm) \defi \{\mW \in \RR_+^{n \times m}\; |\; \mW\mathbf{1}_m=\vn, \mW^\intercal\mathbf{1}_n=\vm\} \, 
$
containing $n \times m$ matrices with row and column sums $\vn$ and $\vm$ respectively. Correspondingly, the entropy for $\mA \in U(\vn, \vm)$ is defined as $H(\mA)=-\sum_{i,j=1}^{n, m} a_{ij} \log a_{ij}$. \\
Let $\mC \in \sR_+^{n\times m}$ be the positive cost matrix, the \gls{ot} between $\vn$ and $\vm$ given cost $\mC$ is
$
    \OT(\vn, \vm) \defi \min_{\mW \in U(\vn, \vm)} \langle  \mW, \mC \rangle.
$
Traditionally, \gls{ot} does not scale well with high dimensions. To address this,~\citet{cuturi2013sinkhorn} proposes to regularize its objective with an entropy term, resulting in the entropic-regularized \gls{ot}
\begin{equation}
\label{eq:otdef}
\OTlambda(\vn, \vm) \defi
\textstyle\min_{\mW \in U(\vn, \vm)} \langle \mW, \mC \rangle - \lambda H(\mW).
\end{equation}
Solving (\ref{eq:otdef}) with Sinkhorn-Knopp~\cite{cuturi2013sinkhorn} has a complexity of $\Tilde{O}(n^2/\epsilon^3)$~\cite{altschuler2017near}, where $\epsilon$ is the approximation error w.r.t. the original \gls{ot}. Higher $\lambda$ enables a faster but ``blurry'' solution, and vice versa. 

\paragraph{Trajectory optimization.}
Given a parameterized trajectory by a discrete set of support states and control inputs $\vtau = [\mathbf{x}_0, \vu_0, ..., \mathbf{x}_{T-1}, \vu_{T-1}, \mathbf{x}_T]^\intercal$, trajectory optimization aims to find the optimal trajectory $\vtau^*$, which minimizes an objective function $c(\vtau)$, with $\mathbf{x}_0$ being the start state. Standard motion planning costs, such as goal cost $c_g$ defined as the distance to a desired goal-state $\vx_g$, obstacle avoidance cost $c_{obs}$, and smoothness cost $c_{sm}$ can be included in the objective. Hence, trajectory optimization can be expressed as the sum of those costs while obeying the dynamics constraint
\begin{align}
    \label{eq:traj_opt}
    \vtau^* =& \arg \min_{\vtau} \left[ c_{obs}(\vtau) +  c_g(\vtau, \mathbf{x}_g) + c_{sm}(\vtau) \right]
    \textrm{ s.t. } \; \;\dot{\vx} = f(\mathbf{x}, \vu) \textrm{ and } \vtau(0) = \vx_0. 
\end{align}
For many manipulation tasks with high-\gls{dof} robots, this optimization problem is typically highly nonlinear due to many complex objectives and constraints. Besides $c_{obs}$, $c_{sm}$ is crucial for finding smooth trajectories for better tracking control. \gls{chomp}~\cite{ratliff2009chomp} designs a finite difference matrix $\mM$ resulting to the smoothness cost ${c_{sm} = \vtau^\intercal\mM\vtau}$. This smoothness cost can be interpreted as a penalty on trajectory derivative magnitudes.~\citet{mukadam2018continuous} generalizes the smoothness cost by incorporating a \gls{gp} prior as cost via the planning-as-inference perspective~\cite{toussaint2009robot, toussaint2014newton}, additionally constraining the trajectories to be dynamically smooth. Recently, an emergent paradigm of multimodal trajectory optimization~\cite{osa2020multimodal, lambert2020stein, urain2022learning, carvalho2023motion} is promising for discovering different modes for non-convex objectives, thereby exhibiting robustness against bad local minima. Our work contributes to this momentum by proposing an efficient batch update-rule for vectorizing waypoint updates across timesteps and number of plans.

\vspace{-2mm}
\section{Sinkhorn Step}  \label{sec:sinkhorn_step}
\vspace{-2mm}

To address the problem of batch trajectory optimization in a gradient-free setting, we propose \textit{Sinkhorn Step}---a zero-order update rule for a batch of optimization variables. Our method draws inspiration from the free-support barycenter problem~\cite{cuturi2014fast}, where the mean support of a set of empirical measures is optimized w.r.t. the \gls{ot} cost. Consider an optimization problem with some objective function without easy access to function derivatives. This barycenter problem can be utilized as a parameter update mechanism, i.e., by defining a set of discrete target points (i.e., local search directions) and a batch of optimizing points as two empirical measures, the barycenter of these empirical measures acts as the updated optimizing points based on the objective function evaluation at the target points. 

With these considerations in mind, we introduce \textit{Sinkhorn Step}, consisting of two components: a polytope structure defining the unbiased search-direction bases, and a weighting distribution for evaluating the search directions. Particularly, the weighting distribution has row-column unit constraints and must be efficient to compute. Following the motivation of~\cite{cuturi2014fast}, the entropic-regularized \gls{ot} fits nicely into the second component, providing a solution for the weighting distribution as an \gls{ot} plan, which is solved extremely fast, and its solution is unique~\cite{cuturi2013sinkhorn}. In this section, we formally define Sinkhorn Step and perform a preliminary theoretical analysis to shed light on its connection to ~\textit{directional-direct search} methods~\cite{hooke1961direct, larson2019derivative}, thereby motivating its algorithmic choices and practical implementation proposed in this paper.

\vspace{-2mm}
\subsection{Problem formulation} \label{sec:sinkhorn_step_problem}
\vspace{-2mm}
We consider the batch optimization problem
\begin{equation} \label{eq:optim}
    \min_{X} f(X) = \min_{X} \sum_{i = 1}^n f(\vx_i),
\end{equation}
where $X = \{\vx_i\}_{i=1}^n$ is a set of $n$ optimizing points, $f: \sR^d \rightarrow \sR$ is non-convex, differentiable, bounded below, and has $L$-Lipschitz gradients.

\begin{assumption} 
\label{asm:lipschitz}
The objective $f$ is $L$-smooth with $L > 0$ 
\begin{equation*}
    \norm{\nabla f(\vx) - \nabla f(\vy)} \leq L \norm{\vx - \vy},\, \forall \vx, \vy \in \sR^d
\end{equation*}
and bounded below by $ f(\vx) \geq f_* \in \sR,\, \forall \vx \in \sR^d$.
\end{assumption}

Throughout the paper, we assume that function evaluation is implemented batch-wise and is cheap to compute. Function derivatives are either expensive or impossible to compute. At the first iteration, we sample a set of initial points $X_0 \sim \gD_0$, with its matrix form $\mX_0 \in \sR^{n \times d}$, from some prior distribution $\gD_0$. The goal is to compute a batch update for the optimizing points, minimizing the objective function. This problem setting suits trajectory optimization described in~\cref{sec:method}.

\vspace{-2mm}
\subsection{Sinkhorn Step formulation} \label{sec:sinkhorn_step_sub}
\vspace{-2mm}

Similar to directional-direct search, Sinkhorn Step typically evaluates the objective function over a search-direction-set $D$, ensuring descent with a sufficiently small stepsize. The search-direction-set is typically a vector-set requiring to be a \textit{positive spanning set}~\cite{regis2016properties}, i.e., its conic hull is $\sR^d = \left\{\sum_i w_i \vd_i,\, \vd_i \in D,\, w_i \geq 0\right\}$, ensuring that every point (including the extrema) in $\sR^d$ is reachable by a sequence of positive steps from any initial point.

\textbf{Regular Polytope Search-Directions}. Consider a $(d-1)$-unit hypersphere $S^{d-1} = \{\vx \in \sR^d: \; \norm{\vx} = 1\}$ with the center at zero. 
\begin{definition}[Regular Polytope Search-Directions]
\label{def:srps}
    Let us denote the regular polytope family $\gP = \{\textrm{simplex},\,\textrm{orthoplex},\, \textrm{hypercube}\}$. Consider a $d$-dimensional polytope $P \in \gP$ with $m$ vertices, the search-direction set $D^P = \{\vd_i \mid \norm{\vd_i} = 1\}_{i=1}^{m}$ is constructed from the vertex set of the regular polytope $P$ inscribing $S^{d-1}$.
\end{definition}

The $d$-dimensional regular polytope family $\gP$ has all of its dihedral angles equal and, hence, is an unbiased sparse approximation of the circumscribed $(d-1)$-sphere, i.e., $\sum_i \vd_i = 0, \norm{\vd_i} = 1\,\forall i$. There also exist other regular polytope families. However, the regular polytope types in $\gP$ exist in every dimension (cf.~\cite{coxeter1973regular}). Moreover, the algorithmic construction of general polytope is not trivial~\cite{kaibel2003some}. Vertex enumeration for $\gP$ is straightforward for vectorization and simple to implement, which we found to work well in our settings--see also \cref{app:polytope}. We state the connection between regular polytopes and the positive spanning set in the following proposition.

\begin{proposition}
\label{prop:pss}
    $\forall P \in \gP,\, D^P$ forms a positive spanning set.
\end{proposition}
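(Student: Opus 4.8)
The plan is to show that for each $P \in \gP$, the vertex set $D^P$ spans $\sR^d$ with nonnegative coefficients, i.e.\ that its conic hull is all of $\sR^d$. A convenient sufficient condition is the following classical fact about positive spanning sets: a set $D = \{\vd_i\}$ is a positive spanning set if and only if $D$ linearly spans $\sR^d$ \emph{and} the origin lies in the interior of the convex hull of $D$ (equivalently, $\vzero = \sum_i w_i \vd_i$ for some strictly positive weights $w_i$, together with $\mathrm{span}(D) = \sR^d$). So the proof reduces to verifying these two conditions for the simplex, the orthoplex (cross-polytope), and the hypercube inscribed in $S^{d-1}$.

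First I would handle the ``balanced'' condition, which is immediate from the remark already made in the text: each $P \in \gP$ is a regular polytope whose vertices are symmetrically placed on $S^{d-1}$, so $\sum_{i=1}^m \vd_i = \vzero$. Since all $\vd_i$ are distinct unit vectors and $m \geq 2$, writing $\vzero = \sum_i \tfrac{1}{m}\vd_i$ exhibits the origin as a convex combination with strictly positive weights; by symmetry of the vertex figure no $\vd_i$ is a nonnegative combination of the others, so the origin is in fact in the relative interior of $\mathrm{conv}(D^P)$. Second I would check the linear spanning condition dimension-by-polytope: the hypercube has $2^d$ vertices which obviously contain a basis (e.g.\ the $d$ vectors adjacent to a fixed vertex, suitably normalized, or simply note the vertices $\tfrac{1}{\sqrt d}(\pm1,\dots,\pm1)$ clearly span); the orthoplex has vertices $\{\pm \ve_i\}_{i=1}^d$, which literally contain the standard basis; and the regular simplex has $d+1$ affinely independent vertices summing to zero, so any $d$ of them are linearly independent and span $\sR^d$. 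Combining: $D^P$ linearly spans $\sR^d$ and the origin is a strictly positive combination of its elements, hence $D^P$ is a positive spanning set.

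Concretely, to conclude $\mathrm{cone}(D^P) = \sR^d$ from these two facts: given any $\vy \in \sR^d$, write $\vy = \sum_i c_i \vd_i$ using the spanning property (coefficients $c_i$ of any sign), and then add a large multiple $t$ of the relation $\vzero = \sum_i \tfrac1m \vd_i$ to get $\vy = \sum_i (c_i + t/m)\,\vd_i$; choosing $t$ large enough makes every coefficient nonnegative. This is the standard argument and is essentially bookkeeping once the two structural facts are in hand.

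The main obstacle is not conceptual but expository: it is making the vertex-enumeration / spanning claims for all three families \emph{uniformly} and cleanly, rather than special-casing low dimensions. The cleanest route is to give explicit coordinate descriptions (orthoplex: $\{\pm\ve_i\}$; hypercube: $\tfrac{1}{\sqrt d}\{\pm1\}^d$; simplex: the standard regular-simplex coordinates, e.g.\ the $d+1$ points obtained by projecting $\{\ve_1,\dots,\ve_{d+1}\}$ onto the hyperplane orthogonal to $\mathbf{1}_{d+1}$ and normalizing), from which both the ``sum to zero'' and ``contains a basis'' properties are read off directly; I would relegate these explicit constructions to the appendix already referenced (\cref{app:polytope}) and keep the body-level proof at the level of the two-line structural argument above.
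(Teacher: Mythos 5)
Your proof is correct, and it follows the same underlying geometric idea as the paper's argument, but it is noticeably more explicit. The paper's proof sketch only observes that $\mathrm{conv}(\gV^P)$ is $d$-dimensional and then declares it ``trivial'' that the conic hull of $D^P$ is all of $\sR^d$; strictly speaking, full-dimensionality of the convex hull alone does not suffice (a full-dimensional polytope far from the origin has a pointed conic hull), and the missing ingredient is exactly the one you supply: the polytope is centered at the origin, so $\sum_i \vd_i = \vzero$ exhibits the origin as a strictly positive combination of the vertices (a fact the paper states in the main text just before the proposition but does not invoke in the proof). Your decomposition into the classical criterion --- linear spanning plus the origin as a strictly positive combination --- together with the per-family verification of spanning (orthoplex contains $\pm\ve_i$, hypercube vertices obviously span, any $d$ of the $d+1$ regular-simplex vertices are linearly independent) and the standard ``add $t\sum_i \tfrac{1}{m}\vd_i = \vzero$ to shift all coefficients nonnegative'' argument yields a complete proof and, in effect, patches the small gap in the paper's sketch. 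One minor remark: your aside that no $\vd_i$ is a nonnegative combination of the others (to place the origin in the relative interior) is unnecessary for the conclusion --- the sum-to-zero relation and linear spanning already suffice via your shifting argument --- so you could drop it without loss.
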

This property ensures that any point $\vx \in \sR^d,\, \vx = \sum_i w_i \vd_i,\, w_i \geq 0,\,\vd_i \in D^P$ can be represented by a positively weighted sum of the set of directions defined by the polytopes.

\textbf{Batch Update Rule}. At an iteration $k$, given the current optimizing points $X_k$ and their matrix form $\mX_{k} \in \sR^{n \times d}$, we first construct the direction set from a chosen polytope $P$, and denote the direction set $\mD^P \in \sR^{m \times d}$ in matrix form. Similar to~\cite{cuturi2014fast}, let us define the prior histograms reflecting the importance of optimizing points $\vn \in \Sigma_n$ and the search directions $\vm \in \Sigma_m$, then the constraint space $U(\vn, \vm)$ of~\gls{ot} is defined. With these settings, we define  Sinkhorn Step. 
\begin{definition} [Sinkhorn Step]
\label{def:sinkhorn_step}
    The batch update rule is the barycentric projection (Remark 4.11, ~\cite{peyre2019computational}) that optimizes the free-support barycenter of the optimizing points and the batch polytope vertices
    \begin{align} \label{eq:sinkhorn_step}
        \begin{split}
        \mX_{k + 1} = \mX_{k} + \mS_k,\, \mS_k = \alpha_k \textrm{diag}(\vn)^{-1} \mW^*_\lambda \mD^P  \\ 
        \textrm{ s.t. }  \mW^*_\lambda = \textstyle\argmin_{\mW \in U(\vn, \vm)} \langle \mW, \mC \rangle - \lambda H(\mW),
        \end{split}
    \end{align}
     with $\alpha_k > 0$ as the stepsize, $\mC \in \sR^{n \times m},\, \mC_{i, j} = f(\vx_i + \alpha_k \vd_{j}),\, \vx_i \in X_k, \vd_j \in D^P$ is the local objective matrix evaluated at the linear-translated polytope vertices.
\end{definition}
Observe that the matrix $\textrm{diag}(\vn)^{-1} \mW^*_\lambda$ has $n$ row vectors in the simplex $\Sigma_m$. The batch update \textit{transports} $\mX$ to a barycenter shaping by the polytopes with weights defined by the optimal solution $\mW^*_\lambda$. However, in contrast with the barycenter problem~\cite{cuturi2014fast}, the target measure supports are constructed locally at each optimizing point, and, thus, the points are transported in accordance with their local search directions. By~\cref{prop:pss}, $D^P$ is a positive spanning set, thus, $\mW^*_\lambda$ forms a \textit{generalized barycentric coordinate}, defined w.r.t. the finite set of polytope vertices. This property implies any point in $\sR^d$ can be reached by a sequence of Sinkhorn Steps. For the $d$-simplex case, any point inside the convex hull can be identified with a unique barycentric coordinate~\cite{munkres2018elements}, which is not the case for $d$-orthoplex or $d$-cube. However, coordinate uniqueness is not required for our analysis in this paper, given the following assumption.
\begin{assumption}
\label{asm:square}
    At any iteration $k > 0$, the prior histogram on the optimizing points and the search-direction set is uniform $\vn = \vm = \mathbf{1}_n / n$, having the same dimension $n = m$. Additionally, the entropic scaling approaches zero $\lambda \rightarrow 0$.
\end{assumption}
Assuming uniform prior importance of the optimizing points and their search directions is natural since, in many cases, priors for stepping are unavailable. However, our formulation also suggests a conditional Sinkhorn Step, which is interesting to study in future work. This assumption allows performing an analysis on Sinkhorn Step on the original \gls{ot} solution. 
\\
With these assumptions, we can state the following theorem for each $\vx_k \in X_k$ separately, given that they follow the Sinkhorn Step rule. 
\begin{theorem} [Main result] \label{thm:main}
    If~\cref{asm:lipschitz} and~\cref{asm:square} hold and the stepsize is sufficiently small $\alpha_k = \alpha$ with $0 < \alpha < 2\mu_P \eps / L$, then with a sufficient number of iterations 
    \begin{equation} \label{eq:K_bound}
        K \geq k(\eps) \defi \frac{f(\vx_0) - f_*}{(\mu_P \eps - \frac{L \alpha}{2}) \alpha}
 - 1,
    \end{equation}
    we have $\min_{0 \leq k \leq K} \norm{\nabla f(\vx_k)} \leq \eps,\,\forall \vx_k \in X_k$.
\end{theorem}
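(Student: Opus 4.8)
The plan is to follow the classical convergence scheme for directional-direct search, adapted to the barycentric update, and to reduce everything to a one-step descent inequality. I would fix one of the $n$ optimizing sequences $(\vx_k)_{k \ge 0}$ produced by the Sinkhorn Step; the per-point conclusion follows because the argument never uses more than a single coordinate of $X_k$ at a time. Under \cref{asm:square} ($\lambda \to 0$, $n=m$, uniform marginals) the entropic program in \cref{def:sinkhorn_step} degenerates to an optimal assignment, so $\textrm{diag}(\vn)^{-1}\mW^*_\lambda$ is (a convex combination of) permutation matrices and the update of the chosen point is $\vx_{k+1} = \vx_k + \alpha\vs_k$ with $\vs_k \in \mathrm{conv}(D^P)$, hence $\norm{\vs_k}\le 1$; moreover the assignment is exactly the one minimizing the total probed cost $\langle\mW,\mC\rangle$ over all points.

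The core lemma I would establish is $f(\vx_{k+1}) \le f(\vx_k) - \mu_P\,\alpha\,\norm{\nabla f(\vx_k)} + \tfrac{L\alpha^2}{2}$, where $\mu_P$ is the cosine measure of the positive spanning set $D^P$, which is strictly positive by \cref{prop:pss}. Two ingredients enter: (i) the cosine-measure property of a positive spanning set — for every $\vx$ there is a vertex $\vd\in D^P$ with $\langle\nabla f(\vx),\vd\rangle \le -\mu_P\norm{\nabla f(\vx)}$, so at least one probed direction is a guaranteed descent direction; and (ii) the descent lemma from \cref{asm:lipschitz}, $f(\vx+\alpha\vd) \le f(\vx) + \alpha\langle\nabla f(\vx),\vd\rangle + \tfrac{L\alpha^2}{2}\norm{\vd}^2$, applied at that vertex (recall $\norm{\vd}=1$, so the remainder is uniformly $\le L\alpha^2/2$). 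Since the Sinkhorn Step transports $\vx_k$ toward the cost-minimizing probed vertices, its value is controlled by the value attained at the descent vertex, up to these $O(\alpha^2)$ Taylor remainders. Substituting the hypothesis $\norm{\nabla f(\vx_k)} > \eps$ together with $0 < \alpha < 2\mu_P\eps/L$ turns this into a strict one-step decrease $f(\vx_{k+1}) \le f(\vx_k) - (\mu_P\eps - \tfrac{L\alpha}{2})\alpha$ whose right-hand coefficient is positive.

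Finally I would telescope by contradiction: if $\norm{\nabla f(\vx_k)} > \eps$ held for all $k=0,\dots,K$, summing the strict-decrease inequality and using $f\ge f_*$ (\cref{asm:lipschitz}) gives $(K+1)(\mu_P\eps - \tfrac{L\alpha}{2})\alpha < f(\vx_0)-f_*$, i.e. $K < k(\eps)$, contradicting $K \ge k(\eps)$. Hence there is some $k\le K$ with $\norm{\nabla f(\vx_k)} \le \eps$, and since nothing singled out a particular coordinate of $X_k$ this holds for every $\vx_k \in X_k$.

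The step I expect to be the main obstacle is the descent lemma, specifically the passage from \emph{there exists a good probed vertex} to \emph{the coupled OT update of this point actually realizes that decrease}. Unlike plain direct search, the Sinkhorn Step does not let each point choose its own best vertex: the coupling $\mW^*_\lambda$ is one assignment shared across all $n$ points, so matching every point to a descent vertex may be infeasible (when many gradients are nearly aligned they compete for the same few descent directions, and in fully symmetric configurations the optimal assignment can cancel, against $\sum_i\vd_i = 0$, to a near-zero step). Making this rigorous requires either restricting to the per-point argmin idealization of the update — which is presumably what \cref{asm:square} is meant to license — or a Hall-type matching argument showing the optimal assignment still attains the sufficient decrease guaranteed by the positive-spanning property, or a non-degeneracy condition on $X_k$; the remaining bookkeeping (uniform control of the $O(\alpha^2)$ remainders, the boundary case $K=k(\eps)$, which is absorbed by the strictness above) is routine.
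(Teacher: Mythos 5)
Your proposal follows essentially the same route as the paper's own proof: the identical one-step key lemma $f(\vx_{k+1}) \le f(\vx_k) - \mu_P\,\alpha\,\norm{\nabla f(\vx_k)} + \tfrac{L}{2}\alpha^2$ (the paper's \cref{lemma:key}, built from the cosine-measure bound of \cref{lemma:polytope_bound} and the $L$-smooth descent lemma), followed by the same telescoping-by-contradiction argument using $f \ge f_*$ and the stepsize condition. The one obstacle you flag---whether the single shared OT coupling lets each point actually realize the decrease of its own best probed vertex---is exactly the step the paper's proof of \cref{lemma:key} handles under \cref{asm:square}: as $\lambda \to 0$ the plan converges to the maximum-entropy solution of the unregularized assignment problem, Birkhoff's theorem together with the fundamental theorem of linear programming make it a permutation matrix (or an interpolation of tied permutations), and the paper then simply asserts that there exists a ``vertex evaluation permutation'' under which each point's row reduces to its per-point argmin direction (Case 1), with ties handled as convex combinations of tied argmin steps (Case 2), mirroring your own tie discussion. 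In other words, the paper adopts precisely the per-point argmin idealization you suspected \cref{asm:square} was meant to license; no Hall-type matching or non-degeneracy argument is attempted there, so your sketch is not missing anything the paper supplies, and your reservation about points competing for the same descent direction applies equally to the published proof.
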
  
Note that we do not make any additional assumptions on $f$ besides the smoothness and boundedness, and the analysis is performed on non-convex settings.~\cref{thm:main} only guarantees that the gradients of some points in the sequence of Sinkhorn Steps are arbitrarily small, i.e., in the proximity of local minima. If in practice, we implement the sufficient decreasing condition $f(\vx_k) - f(\vx_{k + 1}) \geq c\alpha_k^2$, then $f(\vx_K) \leq f(\vx_i),\, \norm{\nabla f(\vx_i)} \leq \eps$ holds. However, this sufficient decrease check may waste some iterations and worsen the performance. We show in the experiments that the algorithm empirically exhibits convergence behavior without this condition checking. If $L$ is known, then we can compute the optimal stepsize $\alpha = \mu_P \eps / L$, leading to the complexity bound $k(\eps) = \frac{2L(f(\vx_0) - f_*)}{\mu_P^2 \eps^2} - 1$. Therefore, the complexity bounds for $d$-simplex, $d$-orthoplex and $d$-cube are $O(d^2 / \eps^2)$, $O(d / \eps^2)$, and $O(1 / \eps^2)$, respectively. The $d$-cube case shows the same complexity bound $O(1 / \eps^2)$ as the well-known gradient descent complexity bound on the L-smooth function~\cite{garrigos2023handbook}. These results are detailed in~\cref{app:analysis}. Generally, we perform a preliminary study on Sinkhorn Step with~\cref{asm:lipschitz} and~\cref{asm:square} to connect the well-known directional-direct search literature~\cite{hooke1961direct, larson2019derivative}, as many unexplored theoretical properties of Sinkhorn Step remain in practical settings described in~\cref{sec:method}.

\vspace{-2mm}
\section{Motion Planning via Optimal Transport} \label{sec:method}
\vspace{-1mm}
Here, we introduce~\gls{otmp} - a method that applies \textit{Sinkhorn Step} to solve the batch trajectory optimization problem, where we realize waypoints in a set of trajectories as optimizing points. Due to Sinkhorn Step's properties,~\gls{otmp} does not require gradients propagated from cost functions over long kinematics chains. It optimizes trajectories by solving a sequence of strictly convex linear programs with a maximum entropy objective (cf.~\cref{def:sinkhorn_step}), smoothly transporting the waypoints according to the local polytope structure. To promote smoothness and dynamically feasible trajectories, we incorporate the \gls{gp} prior as a cost via the planning-as-inference perspective.

\vspace{-2mm}
\subsection{Planning As Inference With Empirical Waypoint Distribution} \label{sec:ce_mp}
\vspace{-2mm}

Let us consider general discrete-time dynamics $\mX = F(\textbf{x}_0, \mU)$, where $\mX = [\textbf{x}_0, \ldots, \textbf{x}_T]$ denotes the states sequence, ${\mU = [\vu_0, ..., \vu_T]}$ is the control sequence, and $\textbf{x}_0$ is the start state. The target distribution over control trajectories $\mU$ can be defined as the posterior~\cite{williams2017information}
\begin{equation}\label{eq:general_posterior}
   q(\mU) = \frac{1}{Z}\exp\left(-\eta E(\mU)\right) q_0(\mU), 
\end{equation}
with $E(\mU)$ the energy function representing control cost, $q_0(\mU) = \mathcal{N}(\boldsymbol{0}, \vSigma)$ a zero-mean normal prior, $\eta$ a scaling term (temperature), and $Z$ the normalizing scalar.

Assuming a first-order trajectory optimization\footnote{We describe first-order formulation for simplicity. However, this work can be extended to second-order systems similar to~\cite{mukadam2018continuous}.}, the control sequence can be defined as a time-derivative of the states $\mU = [\dot{\textbf{x}}_0, ..., \dot{\textbf{x}}_T]$. The \textit{target posterior distribution} over both state-trajectories and their derivatives ${\vtau = (\mX, \mU) = \{\vx_t \in \sR^d: \vx_t = [\textbf{x}_t, \dot{\textbf{x}}_t]\}_{t=0}^T}$ is defined as
\begin{equation} \label{eq:target_traj_dist}
    q^*(\vtau) = \frac{1}{Z}\exp\big(-\eta c(\vtau)\big)q_F(\vtau), 
\end{equation}
which is similar to \cref{eq:general_posterior} with the energy function ${E = c \circ F(\textbf{x}_0, \mU)}$ being the composition of the cost $c$ over $\vtau$ and the dynamics $F$. The dynamics $F$ is also now integrated into the prior distribution $q_F(\vtau)$. The absorption of the dynamics into the prior becomes evident when we represent the prior as a zero-mean constant-velocity~\gls{gp} prior $q_F(\vtau) = \mathcal{N}(\boldsymbol{0}, \mK)$, with a constant time-discretization $\Delta t$ and the time-correlated trajectory covariance $\mK$, as described in~\cref{app:gp_prior}.

Now, to apply Sinkhorn Step, consider the trajectory we want to optimize ${\vtau=\{\vx_t\}_{t=1}^T}$, we can define the \textit{proposal trajectory distribution} as a waypoint empirical distribution
\begin{equation}\label{eq:empirical_measure}
p(\vx; \vtau) = \sum_{t=1}^T p(t)p(\vx | t) = \sum_{t=1}^T n_t \delta_{\vx_t}(\vx),
\end{equation}
with the histogram $\vn = [n_1, \ldots, n_T],\,p(t) = n_t = 1 / T$, and $\delta_{\vx_t}$ the Dirac on waypoints at time steps $t$. In this case, we consider the model-free setting for the proposal distribution. Indeed, this form of proposal trajectory distribution typically assumes no temporal or spatial (i.e., kinematics) correlation between waypoints. This assumption is also seen in~\cite{ratliff2009chomp, kalakrishnan2011stomp} and can be applied in a wide range of robotics applications where the system model is fully identified. We leverage this property for batch-wise computations and batch updates over all waypoints. The integration of model constraints in the proposal distribution is indeed interesting but is deferred for future work.

Following the planning-as-inference perspective, the motion planning problem can be formulated as the minimization of a  Kullback–Leibler (KL) divergence between the proposal trajectory distribution $p(\vx; \vtau)$ and the target posterior distribution~\cref{eq:target_traj_dist} (i.e., the I-projection) 
\begin{equation}\label{eq:mp_ce_detail}
    \begin{aligned}  
    \vtau^* &= \argmin_{\vtau} \left\{ \textrm{KL}\left(p(\vx; \vtau)\,||\,q^*(\vtau)\right) = \mathbb{E}_p \left[ \log q^*(\vtau) \right] - H(p)\right\}\\
            &= \argmin_{\vtau} \sum_{t=0}^{T - 1} \mathbb{E}_p \left[ \eta c(\vtau) + \frac{1}{2}\norm{\vtau}^2_{\mK} - \log Z \right] \\
            &= \argmin_{\vtau} \sum_{t=0}^{T - 1} \eta\underbrace{c(\vx_t)}_{\text{state cost}} + \underbrace{\frac{1}{2} \| \mathbf{\Phi}_{t,t+1} \vx_t - \vx_{t+1}  \|^2_{\mQ_{t,t+1}^{-1}}}_{\text{transition model cost}},
\end{aligned}
\end{equation}
with $\mathbf{\Phi}_{t,t+1}$ the state transition matrix, and $\mQ_{t,t+1}$ the covariance between time steps $t$ and $t+1$ originated from the~\gls{gp} prior (cf.~\cref{app:gp_prior}), and the normalizing constant of the target posterior $Z$ is absorbed. Note that the entropy of the empirical distribution is constant $H(p) = -\int_{\vx \in \sR^d} \frac{1}{T} \sum_{t=1}^T \delta_{\vx_t}(\vx) \log p(\vx; \vtau) = \log T$. Evidently, KL objective~\cref{eq:mp_ce_detail} becomes a standard motion planning problem~\cref{eq:traj_opt} with the defined waypoint empirical distributions. Note that this objective is not equivalent to~\cref{eq:optim} due to the second coupling term. However, we demonstrate in~\cref{sec:bench} that~\gls{otmp} still exhibits convergence. Indeed, investigating Sinkhorn Step in a general graphical model objective~\cite{wainwright2008graphical} is vital for future work. We apply Sinkhorn Step to~\cref{eq:mp_ce_detail} by realizing the trajectory as a batch of optimizing points $\vtau \in \sR^{T \times d}$. This realization also extends naturally to a batch of trajectories described in the next section.

The main goal of this formulation is to naturally inject the~\gls{gp} dynamics prior to~\gls{otmp}, benefiting from the~\gls{gp} sparse Markovian structure resulting in the second term of the objective~\cref{eq:mp_ce_detail}. This problem formulation differs from the moment-projection objective~\cite{williams2017information, mukadam2018continuous, urain2022learning}, which relies on importance sampling from the proposal distribution to perform parameter updates. Contrarily, we do not encode the model in the proposal distribution and directly optimize for the trajectory parameters, enforcing the model constraints in the cost.
\vspace{-2mm}
\subsection{Practical considerations for applying Sinkhorn Step}
\vspace{-2mm}
For the practical implementation, we make the following realizations to the Sinkhorn Step implementation for optimizing a trajectory $\vtau$. First, we define a set of probe points for denser function evaluations (i.e., cost-to-go for each vertex direction). We populate equidistantly \textit{probe} points along the directions in $D^P$ outwards till reaching a \textit{probe radius} $\beta_k \geq \alpha_k$, resulting in the \textit{probe set} $H^P$ with its matrix form $\mH^P \in \sR^{m \times h \times d}$ with $h$ probe points for each direction (cf.~\cref{fig:method}). Second, we add stochasticity in the search directions by applying a random $d$-dimensional rotation $\mR \in SO(d)$ to the polytopes to promote local exploration (computation of $\mR \in SO(d)$ is discussed in~\cref{app:rot_opetator}). Third, to further decouple the correlations between the waypoints updates, we sample the rotation matrices in batch and then construct the direction sets from the rotated polytopes, resulting in the tensor $\mD^P \in \sR^{T \times m \times d}$. Consequently, the \textit{probe set} is also constructed in batch for every waypoint $\mH^P \in \sR^{T \times m \times h \times d}$. The Sinkhorn Step is computed with the~\textit{einsum} operation along the second dimension (i.e., the $m$-dimension) of $\mD^P$ and $\mH^P$. In intuition, the second and third considerations facilitate random permutation of the rows of the~\gls{ot} cost matrix.

With these considerations, the element of the $t^{\text{th}}$-waypoint and $i^{\text{th}}$-search directions in the \gls{ot} cost matrix $\mC \in \sR^{T \times m}$ is the mean of probe point evaluation along a search direction (i.e., cost-to-go)
\begin{equation}\label{eq:cost_matrix}
\small{
\mC_{t, i} = \frac{1}{h} \textstyle\sum_{j=1}^h \eta\,c(\vx_t + \vy_{t, i, j}) + \frac{1}{2} \| \mathbf{\Phi}_{t,t+1} \vx_t - (\vx_{t + 1} + \vy_{t+1, i, j}) \|^2_{\mQ_{t,t+1}^{-1}}
},
\end{equation}
with the probe point $\vy_{t, i, j} \in H^P$. Then, we ensure the cost matrix positiveness for numerical stability by subtracting its minimum value. With uniform prior histograms $\vn = \mathbf{1}_T / T,\, \vm=\mathbf{1}_m / m$, the problem $\mW^* = \argmin \OTlambda(\vn, \vm)$ is instantiated and solved with the log-domain stabilization version~\cite{chizat2018scaling, schmitzer2019stabilized} of the Sinkhorn algorithm. By setting a moderately small $\lambda=0.01$ to balance between performance and blurring bias, the update does not always collapse towards the vertices of the polytope, but to a conservative one inside the polytope convex hull. In fact, the Sinkhorn Step defines an \textit{explicit trust region}, which bounds the update inside the polytope convex hull. More discussions of log-domain stabilization and trust region properties are in~\cref{app:log_sinkhorn_knopp} and~\cref{app:explicit_trust_region}.
\begin{wrapfigure}[23]{r}{0.45\textwidth}
   \centering
\includegraphics[width=0.4\textwidth]{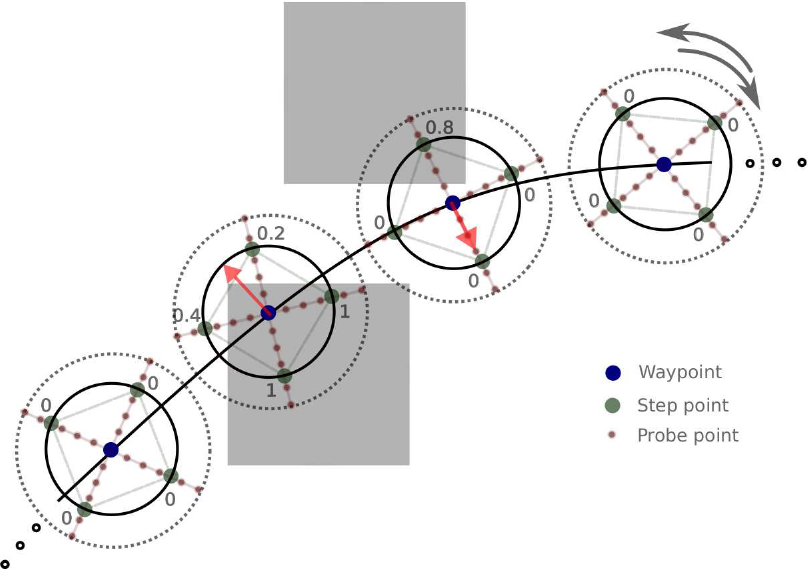}   
 \vspace{-0.25cm}
 \caption{Graphical illustration of Sinkhorn Step with practical considerations. In this point-mass example, we zoom-in one part of the discretized trajectory. The search-direction sets are constructed from randomly rotated $2$-cube vertices at each iteration, depicted by the gray arrows and the green points. The gray numbers are the averaged costs over the red probe points in each vertex direction. Note that for clarity, we only visualize an occupancy obstacle cost. The red arrows describe the updates that transport the waypoints gradually out of the obstacles, depending on the (solid inner) polytope circumcircle $\alpha_k$ and (dotted outer) probe circle $\beta_k$.}
\label{fig:method}
\end{wrapfigure}
In the trajectory optimization experiments, we typically do not assume any cost structure (e.g., non-smooth, non-convex). In \gls{otmp},~\cref{asm:square} is usually violated with $T \gg m$, but~\gls{otmp} still works well due to the soft assignment of Sinkhorn distances. We observe that finer function evaluations, randomly rotated polytopes, and moderately small $\lambda$ increase the algorithm's robustness against practical conditions. Note that these implementation technicalities do not incur much overhead due to the efficient batch computation of modern GPU.
\vspace{-2mm}
\subsection{Batch trajectory optimization} \label{sec:algorithm}
\vspace{-2mm}
We leverage our Sinkhorn Step to optimize multiple trajectories in parallel, efficiently providing many feasible solutions for multi-modal planning problems. Specifically, we implement \gls{otmp} using PyTorch~\cite{paszke2019pytorch} for vectorization across different motion plans, randomly rotated polytope constructions, and \textit{probe set} cost evaluations. For a problem instance, we consider $N_p$ trajectories of horizon $T$, and thus, the trajectory set $\gT=\{\vtau_1, \ldots, \vtau_{N_p}\}$ is the parameter to be optimized. We can flatten the trajectories into the set of $N = N_p \times T$ waypoints. Now, the tensors of search directions and \textit{probe set} $\mD^P \in \sR^{N \times m \times d}, \, \mH^P \in \sR^{N \times m \times h \times d}$ can be efficiently constructed and evaluated by the state cost function $c(\cdot)$, provided that the cost function is implemented with batch-wise processing (e.g., neural network models in PyTorch). Similarly, the model cost term in~\cref{eq:mp_ce_detail} can also be evaluated in batch by vectorizing the computation of the second term in~\cref{eq:cost_matrix}.
\\
At each iteration, it is optional to anneal the stepsize $\alpha_k$ and \textit{probe radius} $\beta_k$. Often we do not know the Lipschitz constant $L$ in practice, so the optimal stepsize cannot be computed. Hence, the Sinkhorn Step might oscillate around some local minima. It is an approximation artifact that can be mitigated by reducing the radius of the ball-search over time, gradually changing from an exploratory to an exploitative behavior. Annealing the ball search radius while keeping the number of probe points increases the chance of approximating better ill-conditioned cost structure, e.g., large condition number locally.
\\
To initialize the trajectories, we randomly sample from the discretized~\gls{gp} prior $\gT^0 \sim \mathcal{N}(\vmu_0, \mK_0)$, where $\vmu_0$ is a constant-velocity, straight-line trajectory from start-to-goal state, and $\mK_0 \in \sR^{(T \times d) \times (T \times d)}$ is a large \gls{gp} covariance matrix for exploratory initialization~\cite{peters2010relative, schulman2015trust} (cf.~\cref{app:gp_prior}).
\begin{figure}[t]
\removelatexerror
\begin{algorithm}[H]
\small
    \DontPrintSemicolon 
    \vspace{1pt}
    $\gT^0 \sim \mathcal{N}(\vmu_0, \mK_0)$ and $\vn = \mathbf{1}_N / N,\, \vm= \mathbf{1}_m / m$\\
    \While{termination criteria not met} {
        \vspace{2pt}
        (Optional) $\alpha \leftarrow (1 - \epsilon)\alpha,\, \beta \leftarrow (1 - \epsilon)\beta$ \tcp*{Epsilon Annealing for Sinkhorn Step} 
        Construct randomly rotated $D^P, H^P$ and compute the cost matrix $\mC$ as in~\cref{eq:cost_matrix}\\
        Perform Sinkhorn Step $\gT \leftarrow \gT + \mathbf{S}$\\
        \vspace{-2pt}
    }
\caption{Motion Planning via Optimal Transport}
    \label{alg:otmp}
    \vspace{-3pt}
\end{algorithm}%
\vspace{-0.6cm}
\end{figure}
In execution, we select the lowest cost trajectory $\vtau^* \in \gT^*$. 
For collecting a trajectory dataset, all collision-free trajectories $\gT^*$ are stored along with contextual data, such as occupancy map, goal state, etc. See Algorithm~\ref{alg:otmp} for an overview of \gls{otmp}. Further discussions on batch trajectory optimization are in~\cref{app:additional_discussion}.

\vspace{-2mm}
\section{Experiments}  \label{sec:exp}
\vspace{-1mm}
We experimentally evaluate \gls{otmp} in PyBullet simulated tasks, which involve high-dimensional state space, multiple objectives, and challenging costs. First, we benchmark our method against strong motion planning baselines in a densely cluttered 2D-point-mass and a 7-\gls{dof} robot arm (Panda) environment. Subsequently, we study the convergence of \gls{otmp} empirically. Finally, we demonstrate the efficacy of our method on high-dimensional mobile manipulation tasks with TIAGo++. Additional ablation studies on the design choices of \gls{otmp}, and gradient approximation capability on a smooth function of Sinkhorn Step w.r.t. different hyperparameter settings are in the~\cref{app:ablations}.
\vspace{-2mm}
\subsection{Experimental setup}
\vspace{-1mm}
In all experiments, all planners optimize first-order trajectories with positions and velocities in configuration space. The batch trajectory optimization dimension is $N \times T \times d$, where $d$ is the full-state concatenating position and velocity.
\\
For the \textit{point-mass} environment, we populate $15$ square and circle obstacles randomly and uniformly inside x-y limits of $[-10, 10]$, with each obstacle having a radius or width of $2$ (cf.~\cref{fig:otmp_planar}). We generate $100$ environment-seeds, and for each environment-seed, we randomly sample $10$ collision-free pairs of start and goal states, resulting in $1000$ planning tasks. We plan each task in parallel $100$ trajectories of horizon $64$. A trajectory is considered successful if it is collision-free.
\\
For the \textit{Panda} environment, we also generate $100$ environment-seeds. Each environment-seed contains randomly sampled $15$ obstacle-spheres having a radius of $10$cm inside the x-y-z limits of $[[-0.7, 0.7], [-0.7, 0.7], [0.1, 1.]]$, ensuring that the Panda's initial configuration has no collisions (cf.~\cref{app:experiments}). Then, we sample $5$ random collision-free (including self-collision-free) target configurations, resulting in $500$ planning tasks, and plan in parallel $10$ trajectories containing $64$ timesteps.
\\
In the last experiment, we design a realistic high-dimensional mobile manipulation task in PyBullet (cf.~\cref{fig:tiago_exp}). The task comprises two parts: the \textit{fetch} part and \textit{place} part; thus, it requires solving two planning problems. Each plan contains $128$ timesteps, and we plan a single trajectory for each planner due to the high-computational and memory demands. We generate $20$ seeds by randomly spawning the robot in the room, resulting in $20$ tasks.
\\
The motion planning costs are the $SE(3)$ goal, obstacle, self-collision, and joint-limit costs. The state dimension (configuration position and velocity) is $d=4$ for the point-mass experiment, $d=14$ for the Panda experiment, and $d=36$ ($3$ dimensions for the base, $1$ for the torso, and $14$ for the two arms) for the mobile manipulation experiment. As for polytope settings, we choose a $4$-cube for the point-mass case, a $14$-othorplex for Panda, and a $36$-othorplex for TIAGo++. Further experiment details are in~\cref{app:experiments}.
\paragraph{Baselines.} We compare \gls{otmp} to popular trajectory planners, which are also straightforward to implement and vectorize in PyTorch for a fair comparison (even if the vectorization is not mentioned in their original papers). The chosen baselines are gradient-based planners: \gls{chomp}~\cite{ratliff2009chomp} and GPMP2 (no interpolation)~\cite{mukadam2018continuous}; sampling-based planners: RRT\textsuperscript{*}~\cite{kuffner2000rrt, karaman2011sampling} and its informed version I-RRT\textsuperscript{*}~\cite{gammell2014informed}, \gls{stomp}~\cite{kalakrishnan2011stomp}, and the recent work \gls{sgpmp}~\cite{urain2022learning}. We implemented all baselines in PyTorch except for RRT\textsuperscript{*} and I-RRT\textsuperscript{*}, which we plan with a loop using CPU.\footnote{To the best of our knowledge, vectorization of RRT\textsuperscript{*} is non-trivial and still an open problem.} We found that resetting the tree, rather than reusing it, is much faster for generating multiple trajectories; hence, we reset RRT\textsuperscript{*} and I-RRT\textsuperscript{*} when they find their first solution.
\paragraph{Metrics.} Comparing various aspects among different types of motion planners is challenging. We aim to benchmark the capability of planners to parallelize trajectory optimization under dense environment settings. The metrics are T$[s]$ - \textit{planning time} until convergence; SUC$[\%]$ - \textit{success rate} over tasks in an environment-seed, where success means there is at least one successful trajectory found each task; GOOD$[\%]$ - success percentage of total parallelized plans in each environment-seed, reflecting the \textit{parallelization quality}; S - \textit{smoothness} measured by the norm of the finite difference of trajectory velocities, averaged over all trajectories and horizons; PL - \textit{path length}.

\begin{table}[tb!]
\renewcommand{\arraystretch}{0.9}
\normalsize
\centering
\captionof{table}{Trajectory generation benchmarks in densely cluttered environments. RRT\textsuperscript{*} and I-RRT\textsuperscript{*} success and collision-free rates depict the maximum achievable values for all planners. S and PL statistics are computed on successful trajectories only.}
\label{tab:benchmark}
\adjustbox{width=\columnwidth,center}{
\begin{tabular}{@{}lccccccccccc@{}}
\toprule
\phantom{Var.} &  
\multicolumn{5}{c}{point-mass Experiment} && \multicolumn{5}{c}{Panda Experiment}\\
\cmidrule{2-6}
\cmidrule{8-12}
& {T$[s]$} & {SUC$[\%]$} & {GOOD$[\%]$} & {S} & {PL} & {} & {T$[s]$} & {SUC$[\%]$} & {GOOD$[\%]$} & {S} & {PL} \\
\midrule
RRT\textsuperscript{*}
& $43.2\,\pm15.2$ & $100\,\pm0.$ & $100\,\pm0.$ & $0.43\,\pm0.12$ & $23.8\,\pm4.6$
& 
& $186.9\,\pm184.2$ & $100\,\pm0.$ & $73.8\,\pm26.7$ & $0.17\pm0.05$ & $7.8\,\pm2.9$\\
I-RRT\textsuperscript{*}
& $43.6\,\pm13.8$ & $100\,\pm0.$ & $100\,\pm0.$ & $0.43\,\pm0.11$ & $23.9\,\pm4.8$
&
& $184.2\,\pm166.0$ & $100\,\pm0.$ & $74.6\,\pm29.0$ & $0.17\pm0.05$ & $7.6\,\pm3.2$ \\
\hline
\hline
\\
STOMP
& $2.2\,\pm0.1$ & $31.4\,\pm13.9$ & $10.5\,\pm25.7$ & $\mathbf{0.01}\,\pm0.01$ & $\mathbf{17.0}\,\pm1.4$
&
& $4.3\,\pm0.1$ & $50.8\,\pm28.3$ & $35.3\pm42.0$ & $\mathbf{0.01}\,\pm0.0$ & $\mathbf{4.5}\,\pm0.8$\\
SGPMP
& $6.5\,\pm0.9$ & $98.6\,\pm4.5$ & $\mathbf{74.9}\,\pm28.9$ & $0.03\,\pm0.01$ & $18.3\,\pm2.0$
&
& $5.0\,\pm 0.2$ & $67.8\,\pm23.5$ & $58.1\,\pm45.8$ & $\mathbf{0.01}\,\pm0.0$ & $\mathbf{4.5}\,\pm0.9$\\ [1ex]
CHOMP
& $0.5\,\pm0.1$ & $70.9\,\pm16.7$ & $38.6\,\pm40.7$ & $0.03\,\pm0.0$ & $17.7\,\pm1.7$
&
& $3.1\,\pm0.3$ & $63.0\,\pm25.5$ & $51.6\pm46.2$ & $0.02\,\pm0.0$ & $4.6\,\pm0.8$\\
GPMP2
& $2.8\,\pm0.1$ & $98.3\,\pm4.9$ & $\mathbf{74.9}\,\pm32.1$ & $0.07\,\pm0.05$ & $20.3\,\pm3.1$
&
& $3.3\,\pm0.2$ & $66.0\,\pm25.2$ & $53.2\,\pm42.3$ & $\mathbf{0.01}\,\pm0.0$& $4.9\,\pm0.8$  \\ [1ex]
\textbf{MPOT}
& $\mathbf{0.4}\,\pm0.0$ & $\mathbf{99.2}\,\pm3.1$ & $73.6\,\pm26.7$ & $0.06\,\pm0.03$ & $19.3\,\pm2.3$
&
& $\mathbf{0.8}\,\pm0.1$ & $\mathbf{71.6}\,\pm23.2$ & $\mathbf{60.2}\,\pm44.4$ & $\mathbf{0.01}\,\pm0.01$ & $4.6\pm0.9$\\
\bottomrule
\end{tabular}}

\centering
  \begin{subfigure}[t]{0.48\textwidth}
  \centering
    \includegraphics[width=\textwidth]{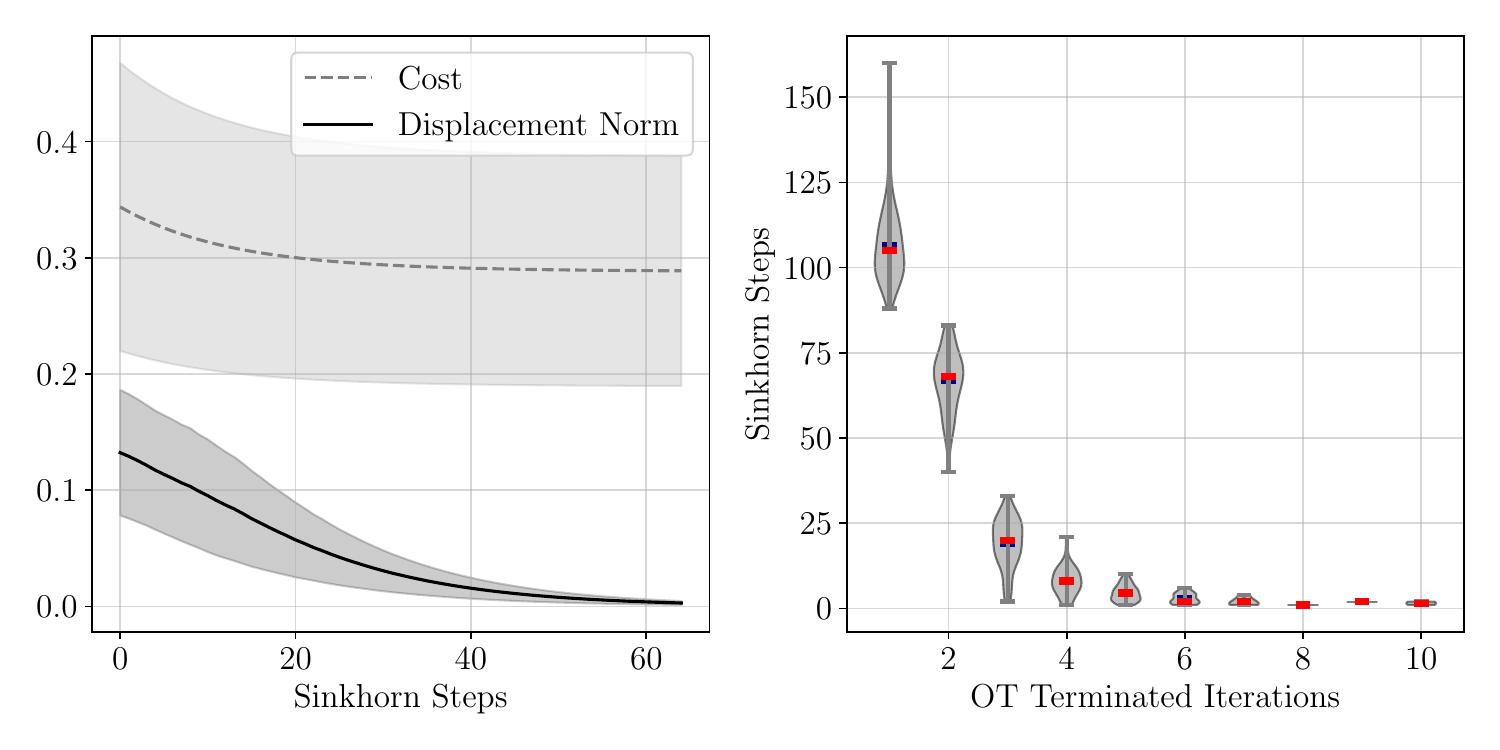}
    \vspace{-0.6cm}
    \caption{}
  \end{subfigure}
  \quad
  \begin{subfigure}[t]{0.48\textwidth}
  \centering
    \includegraphics[width=\textwidth]{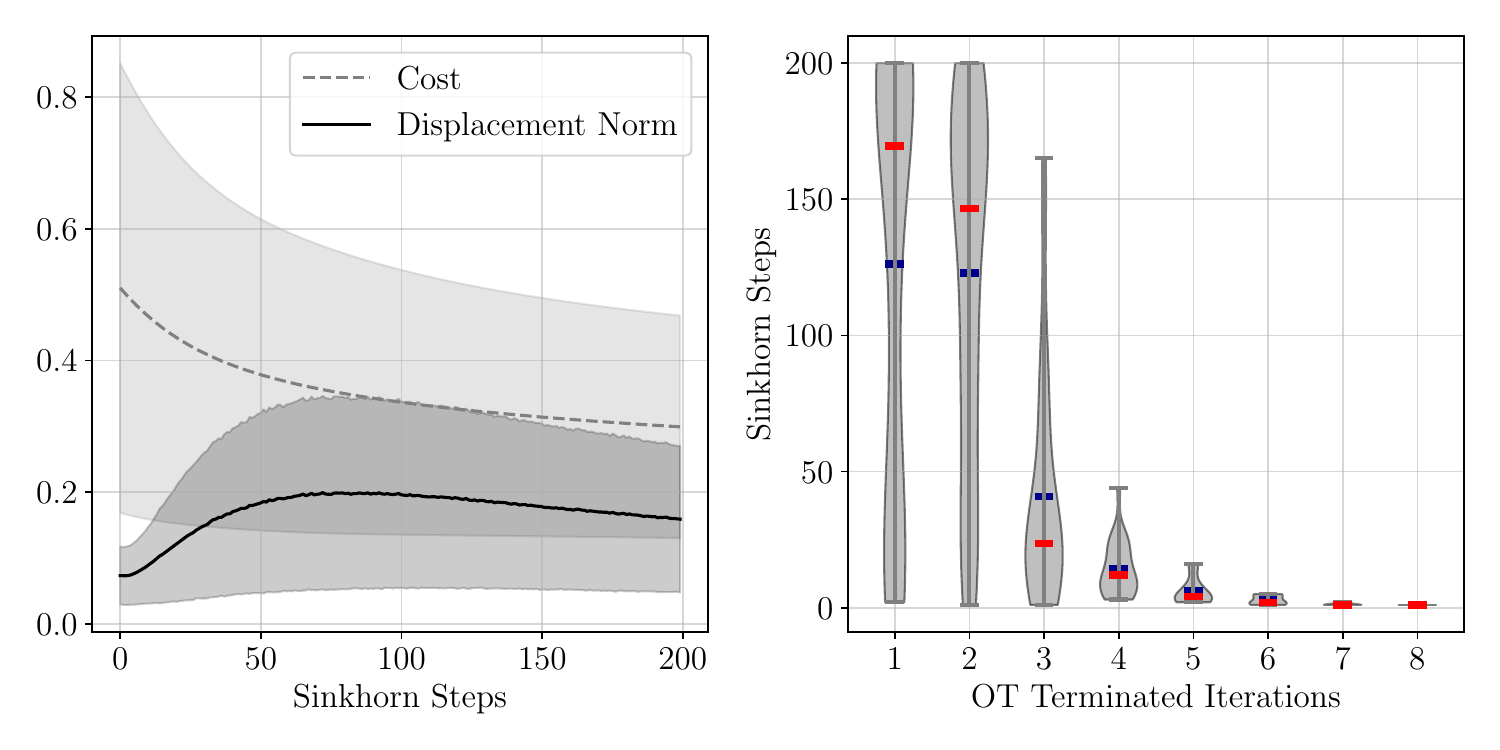}
        \vspace{-0.6cm}
    \caption{}
  \end{subfigure}
  \vspace{-0.25cm}
  \captionof{figure}{Convergence analysis of \gls{otmp} in Panda benchmark. The plots show the cost convergence when applying \textit{step radius} annealing $\epsilon=0.035$ and without. The plots imply that by following the Sinkhorn Steps, even without annealing, the cost converges exponentially (w.r.t. the update step size shown by the displacement norm). Slower convergence is observed without annealing. The right plots depict the number of iterations for solving the inner \gls{ot} problem, whose stopping threshold is set at $10^{-5}$. The mean and median of the violin plots are shown in blue and red, respectively. This shows that later iterations require fewer OT iterations, which attributes to the efficiency of \gls{otmp}.}
  \vspace{-2mm}
\label{fig:otmp_analysis}
\vspace{-0.5cm}
\end{table}
\vspace{-2mm}
\subsection{Benchmarking results}\label{sec:bench}
\vspace{-1mm}
We present the comparative results between \gls{otmp} and the baselines in~\cref{tab:benchmark}.
While RRT\textsuperscript{*} and I-RRT\textsuperscript{*} achieve perfect results on success criteria, their planning time is dramatically high, which reconfirms the issues of RRT\textsuperscript{*} in narrow passages and high-dimensional settings. Moreover, solutions of RRT\textsuperscript{*} need postprocessing to improve smoothness. For GPMP2, the success rate is comparable but requires computational effort. \gls{chomp}, known for its limitation in narrow passages~\cite{zucker2013chomp}, requiring a small stepsize to work. This parallelization setting requires a larger step size for all trajectories to make meaningful updates, which incurs its inherent instability. With \gls{stomp} and \gls{sgpmp} the comparison is ``fairer,'' as they are both gradient-free methods. However, the sampling variance of \gls{stomp} is too restrictive, leading to bad solutions along obstacles near the start and goal configuration. Only SGPMP is comparable in success rate and parallelization quality. Nevertheless, we observe that tuning the proposal distribution variances is difficult in dense environments since they do not consider an obstacle model and cannot sample meaningful ``sharp turns'', hence requiring small update step size, more samples per evaluation, and longer iterations to optimize.
\\
\gls{otmp} achieves better planning time, success rate, and parallelization quality, some with large margins, especially for the Panda experiments, while retaining smoothness due to the \gls{gp} cost. We observe that \gls{otmp} performs particularly well in narrow passages, since waypoints across all trajectories are updated independently, thus avoiding the observed diminishing stepsize issue of the gradient-based planners in parallelization settings. Thanks to the explicit trust region property (cf.~\cref{app:explicit_trust_region}), it is easier to tune the stepsize since it ensures that the update bound of each waypoint is the polytope convex hull. Notably, the \gls{otmp} planning time scales well with dimensionality.
As seen in~\cref{fig:otmp_analysis}, solving \gls{ot} is even more rapid at later Sinkhorn Steps; as the waypoints approach local minima, the \gls{ot} cost matrix becomes more uniform and can be solved with only one or two Sinkhorn iterations.
\vspace{-2mm}
\subsection{Mobile manipulation experiment}
\begin{wraptable}{r}{0.45\textwidth}
\centering
\vspace{-0.45cm}
\includegraphics[width=0.45\columnwidth]{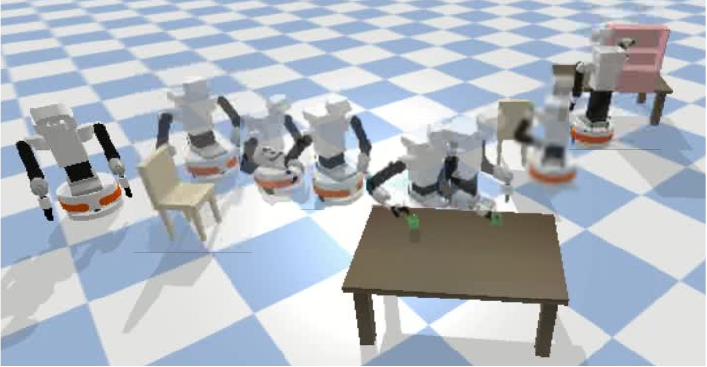}
\captionof{figure}{A TIAGo++ robot has to fetch a cup from a table in a room, then put the cup back on the red shelf while avoiding collisions with the chairs.}
\label{fig:tiago_exp} 
\vspace{-0.25cm}
\begin{center}
\captionof{table}{Mobile fetch \& place. TF$[s]$ depicts the planning time for achieving first successful solution. The average S and PL are evaluated on successful trajectories only. RRT\textsuperscript{*} fails to recover a solution with a very high time budget of 1000 seconds, signifying the environment difficulty.}
\label{tab:tiago_mobile}
\vspace{-0.2cm}
\normalsize
\adjustbox{width=0.45\columnwidth}{
\begin{tabular}{l
@{\extracolsep{\fill}}
c c c c c}
\toprule
\phantom{Var.} & {TF$[s]$} & {SUC$[\%]$} & {S} & {PL}\\
\midrule
RRT\textsuperscript{*}\;\;
& $1000\,\pm0.00$ & $0$ & - & - \\
I-RRT\textsuperscript{*}\;\;
& $1000\,\pm0.00$ & $0$ & - & - \\
\hline
\hline
\\
STOMP\;\;
& - & $0$ & - & - \\
SGPMP\;\;
& $27.75\,\pm0.29$ & $25$ & $\mathbf{0.010}\,\pm0.001$ & $\mathbf{6.69}\pm0.38$\\[1ex]
CHOMP\;\;
& $16.74\,\pm0.21$ & $40$ & $0.015\,\pm0.001$ & $8.60\pm0.73$\\
GPMP2\;\;
& $40.11\,\pm0.08$ & $40$ & $0.012\,\pm0.015$ & $8.63\pm0.53$\\
\textbf{MPOT}
& $\mathbf{1.49}\,\pm0.02$ & $\mathbf{55}$ & $0.022\,\pm0.003$ & $10.53\pm0.62$ \\
\bottomrule
\end{tabular}}%
\vspace{-0.3cm}
\end{center}
\end{wraptable}
We design a long-horizon, high-dimensional whole-body mobile manipulation planning task to stress-test our algorithm. This task requires designing many non-convex costs, e.g., signed-distance fields for gradient-based planners. Moreover, the task space is huge while the $SE(3)$ goal is locally small (i.e., the local grasp-pose, while having hyper-redundant mobile robot, meaning the whole-body IK solution may be unreliable); hence, it typically requires long-horizon configuration trajectories and a small update step-size. Notably, the RRTs fail to find a solution, even with a very high time budget of 1000 seconds, signifying the environment's difficulty. These factors also add to the worst performance of GPMP2 in planning time (\cref{tab:tiago_mobile}). Notably, CHOMP performs worse than GPMP2 and takes more iterations in a cluttered environment in~\cref{tab:benchmark}. However, CHOMP beats GPMP2 in runtime complexity, in this case due to its simpler update rule. STOMP exploration mechanism is restrictive, and we could not tune it to work in this environment.
\gls{otmp} achieves much better planning times by avoiding the propagation of gradients in a long computational chain while retaining the performance with the efficient Sinkhorn Step, facilitating individual waypoint exploration. However, due to the sparsity of the $36$-othorplex ($m = 72$) defining the search direction bases in this high-dimensional case, it becomes hard to balance success rate and smoothness when tuning the algorithm, resulting in worse smoothness than the baselines.
\\
\textbf{Limitations.}~\gls{otmp} is backed by experimental evidence that its planning time scales distinctively with high-dimensional tasks in the parallelization setting while optimizing reasonably smooth trajectories.
Our experiment does not imply that \gls{otmp} should replace prior methods. \gls{otmp} has limitations in some aspects. First, the entropic-regularized \gls{ot} has numerical instabilities when the cost matrix dimension is huge (i.e., huge number of waypoints and vertices). We use log-domain stabilization to mitigate this issue~\cite{chizat2018scaling, schmitzer2019stabilized}. However, in rare cases, we still observe that the Sinkhorn scaling factors diverge, and \gls{otmp} would terminate prematurely. Normalizing the cost matrix, scaling down the cost terms, and slightly increasing the entropy regularization $\lambda$ helps. Second, on the theoretical understanding, we only perform preliminary analysis based on~\cref{asm:square} to connect directional-direct search literature. Analyzing Sinkhorn Steps in other conditions for better understanding, e.g., Sinkhorn Step gradient approximation analysis with arbitrary $\lambda > 0$, Sinkhorn Step on convex functions for sharper complexity bounds, etc., is desirable. Third, learning motion priors~\cite{urain2022learning, carvalho2023motion} can naturally complement~\gls{otmp} to provide even better initializations, as currently, we only use \gls{gp} priors to provide random initial smooth trajectories.

\vspace{-3mm}
\section{Conclusions and Broader Impacts}  \label{sec:conclusion}
\vspace{-2mm}
We presented \glsfirst{otmp}---a gradient-free and efficient batch 
motion planner that optimizes multiple high-dimensional trajectories over non-convex objectives. In particular, we proposed the Sinkhorn Step---a zero-order batch update rule parameterized by a local optimal transport plan with a nice property of cost-agnostic step bound, effectively updating waypoints across trajectories independently. We demonstrated that in practice, our method converges, scales very well to high-dimensional tasks, and provides practically smooth plans. This work opens multiple exciting research questions, such as investigating further polytope families that can be applied for scaling up to even more high-dimensional settings, conditional batch updates, or different strategies for adapting the step-size. Furthermore, while classical motion planning considers single planning instance for each task, which under-utilizes the modern GPU capability, this work encourages future work that benefits from vectorization in the algorithmic choices, providing multiple plans and covering several modes, leading to execution robustness or even for dataset collection for downstream learning tasks. At last, we foresee potential applications of Sinkhorn Step to sampling methods or variational inference.

\clearpage

\begin{ack}
An T. Le was supported by the German Research Foundation project METRIC4IMITATION (PE 2315/11-1). Georgia Chalvatzaki was supported by the German Research Foundation (DFG) Emmy Noether Programme (CH 2676/1-1). We also gratefully acknowledge Alexander Lambert for his implementation of the Gaussian Process prior; Pascal Klink, Joe Watson, João Carvalho, and Julen Urain for the insightful and fruitful discussions; Snehal Jauhri, Puze Liu, and João Carvalho for their help in setting up the TIAGo++ environments.
\end{ack}

\bibliography{ot, math, mp, learning4mp, references}
\bibliographystyle{unsrtnat}

\newpage
\appendix
\section{Theoretical Analysis \& Proofs} \label{app:analysis}
\setcounter{proposition}{0}
\setcounter{lemma}{0}
\setcounter{theorem}{0}

We investigate the proposed Sinkhorn Step (\cref{def:sinkhorn_step} in~\cref{sec:sinkhorn_step_sub}) properties for a non-convex and smooth objective function (\cref{asm:lipschitz}). Our preliminary analysis performs at arbitrary iteration $k > 0$ and depends on~\cref{asm:lipschitz} and~\cref{asm:square} stated in~\cref{sec:sinkhorn_step}.

First, we state a proof sketch of~\cref{prop:pss}.
\begin{proposition}
    $\forall P \in \gP,\, D^P$ forms a positive spanning set.
\end{proposition}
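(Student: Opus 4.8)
The plan is to show that for each polytope type in $\gP$, the vertex set $D^P$ positively spans $\sR^d$, i.e., its conic hull equals $\sR^d$. I would proceed case by case, since the three families have different but elementary vertex structures in some convenient coordinate frame, and positive spanning is invariant under orthogonal transformations (so I may assume a canonical inscribed position).

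First, the $d$-orthoplex (cross-polytope): in the standard basis its $m=2d$ vertices are exactly $\{\pm \ve_i\}_{i=1}^d$. Any $\vx \in \sR^d$ is $\sum_i x_i \ve_i = \sum_{i: x_i \geq 0} x_i \ve_i + \sum_{i: x_i < 0} (-x_i)(-\ve_i)$, a nonnegative combination of vertices, so $D^P$ is a positive spanning set. The $d$-cube case follows immediately: its $2^d$ vertices are $\frac{1}{\sqrt d}(\pm 1,\dots,\pm 1)$, and among them it contains (up to the scaling $1/\sqrt d$, which is absorbed by the nonnegative weights) all sign patterns; in particular it contains, for each $i$, a pair of opposite vertices whose average is a positive multiple of $\ve_i$, hence the conic hull of the cube vertices contains each $\pm\ve_i$ and therefore all of $\sR^d$. (Alternatively: any two antipodal vertices of the cube sum to zero, and one can write each $\ve_i$ as a nonnegative combination of cube vertices directly by averaging the $2^{d-1}$ vertices with $+1$ in coordinate $i$ against the complementary set.) For the $d$-simplex with $m=d+1$ unit vertices $\vd_1,\dots,\vd_{d+1}$ inscribed in $S^{d-1}$, the key fact (already invoked in the paper, since $\sum_i \vd_i = 0$ for the regular simplex) is that the centroid of the vertices is the origin: $\sum_{i=1}^{d+1}\vd_i = 0$. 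Then for any $\vx$, write $\vx = \sum_{i=1}^{d+1} c_i \vd_i$ for some (not necessarily nonnegative) coefficients — possible because $d+1$ affinely independent points span $\sR^d$ linearly once the origin is their centroid, so the $\vd_i$ linearly span $\sR^d$ — and set $c_{\min} = \min_i c_i$. Using $\sum_i \vd_i = 0$, $\vx = \sum_i (c_i - c_{\min})\vd_i + c_{\min}\sum_i \vd_i = \sum_i (c_i - c_{\min})\vd_i$ with all coefficients $c_i - c_{\min} \geq 0$, exhibiting $\vx$ in the conic hull.

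The main obstacle is purely bookkeeping: making the simplex argument clean requires noting that the regular simplex's vertices, while only affinely spanning a $d$-dimensional affine flat, \emph{linearly} span $\sR^d$ precisely because the origin (the inscribed sphere's center) is their barycenter and lies in their affine hull — so one must be slightly careful to justify the existence of the representation $\vx = \sum c_i \vd_i$ before shifting coefficients. I would phrase this as: the $d+1$ vectors $\vd_i$ have a single linear dependency $\sum_i \vd_i = 0$ (they are affinely independent, spanning a $d$-dimensional affine set through the origin, hence their linear span is $d$-dimensional), which both guarantees the representation and furnishes the "add a multiple of $0$" trick to nonnegativize it. A uniform alternative that avoids separate casework is to observe that a symmetric vertex set satisfying $\sum_i \vd_i = 0$ and linearly spanning $\sR^d$ is automatically a positive spanning set by this same centroid-shift argument; all three members of $\gP$ have this property (the paper already records $\sum_i \vd_i = 0$ and $\norm{\vd_i}=1$), and linear spanning is clear in each case. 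I would likely present the unified statement as a one-line lemma and then just check linear spanning for each family, which is trivial.
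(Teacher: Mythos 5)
Your proposal is correct, but it argues differently from the paper. The paper's proof is a two-line sketch: it observes that by construction the convex hull of the vertex set of any $P \in \gP$ is full-dimensional ($\dim = d$) and then concludes, using implicitly that the polytope is centered at the origin, that the conic hull of $D^P$ is all of $\sR^d$. You instead give explicit nonnegative representations case by case (the orthoplex via $\{\pm\ve_i\}$, the cube via symmetric sums of vertices, the simplex via the centroid-shift trick), and then distill the unified lemma that any unit vertex set which linearly spans $\sR^d$ and satisfies $\sum_i \vd_i = 0$ is automatically a positive spanning set. Your route is more elementary and, in a sense, more complete: full-dimensionality of the convex hull alone would not suffice (a translated full-dimensional simplex generates only a pointed cone); what closes that gap is precisely the origin lying in the interior of the hull, which your condition $\sum_i \vd_i = 0$ together with linear spanning supplies explicitly, whereas the paper leaves it implicit in ``by construction.'' The paper's argument buys brevity and uniformity over $\gP$ without casework; yours buys a self-contained, checkable certificate and a reusable criterion that would apply to any centrally symmetric or centroid-at-origin direction set, not just the three regular families. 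One small wording fix: in the cube case, ``a pair of opposite vertices whose average is a positive multiple of $\ve_i$'' should not mean antipodal vertices (those average to zero) but two vertices agreeing in coordinate $i$ and opposite in all others; your parenthetical alternative (summing the $2^{d-1}$ vertices with $+1$ in coordinate $i$) is the clean statement.
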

\begin{proof}
    Observe that by construction of $d$-dimensional regular polytope $P \in \gP$, the convex hull of its vertex set $\gV^P$
    \begin{equation*}
        \textrm{conv}(\gV^P) = \left\{ \sum_i w_i \vv_i \mid \vv_i \in \gV^P, \sum_i w_i = 1,\,w_i > 0,\,\forall i \right\}
    \end{equation*}
    has $\textrm{dim}(\textrm{conv}(\gV^P)) = d$ dimensions. Hence, trivially, the conic hull of $D^P$ positively spans $\sR^d$.
\end{proof}

Now, we can investigate the quality of $D^P$ in the following lemma. 

\begin{lemma}
\label{lemma:polytope_bound}
    For any $\va \in \sR^d, \va \neq \boldsymbol{0}$, $\exists \vd \in D^P$ such that 
    \begin{equation*}
        \langle \va, \vd \rangle \geq \mu_P \norm{\va},\, 0 \leq \mu_p \leq 1
    \end{equation*}
    where $\mu_P = 1 / \sqrt{d(d+1)}$ for $P = $ $d$-simplex, $\mu_P = 1 / \sqrt{d}$ for $P = $ $d$-orthoplex, and $\mu_P = 1 / \sqrt{2}$ for $P = $ $d$-cube.
\end{lemma}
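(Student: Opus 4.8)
The plan is to recast the claim as a statement about the \emph{cosine measure} of the direction set, namely $\mu_P \le \min_{\va \in S^{d-1}} \max_{\vd \in D^P} \langle \va, \vd\rangle$, and to verify it family by family from the explicit vertex coordinates. First, since $\langle\va,\vd\rangle$ and $\mu_P\norm{\va}$ are both positively homogeneous of degree one in $\va$, it suffices to treat unit vectors $\va$ with $\norm{\va}=1$; then the existence of a good direction is exactly the bound $\max_{\vd \in D^P}\langle\va,\vd\rangle \ge \mu_P$. The unbiasedness $\sum_{\vd \in D^P}\vd = \boldsymbol{0}$ immediately gives $\sum_{\vd}\langle\va,\vd\rangle = 0$, so this maximum is automatically nonnegative; all the content is in showing it is bounded below by the stated $\mu_P$, which need only be a valid lower bound, not necessarily a tight one.

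For \textbf{$d$-orthoplex}, $D^P = \{\pm e_i\}_{i=1}^d$, hence $\max_{\vd}\langle\va,\vd\rangle = \max_i|a_i| = \norm{\va}_\infty$, and the elementary norm comparison $\norm{\va}_\infty \ge \norm{\va}/\sqrt d$ yields $\mu_P = 1/\sqrt d$. For \textbf{$d$-cube}, the unit-normalised vertices are $\tfrac{1}{\sqrt d}\vepsilon$ with $\vepsilon \in \{\pm 1\}^d$; the coordinatewise choice $\vepsilon = \sign(\va)$ maximises $\langle\va,\cdot\rangle$ over all cube vertices and gives $\max_{\vd}\langle\va,\vd\rangle = \tfrac{1}{\sqrt d}\norm{\va}_1$, after which a comparison between $\norm{\va}_1$ and $\norm{\va}$ produces the claimed constant. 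Both of these cases collapse to one line once the vertex sets are written out.

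The \textbf{$d$-simplex} case is where I expect the real work, because no single direction is extremal there and a small constrained optimisation is unavoidable. I would use the standard realisation of the regular $d$-simplex inscribed in $S^{d-1}$: embed $e_1,\dots,e_{d+1}$ in $\sR^{d+1}$, centre and normalise to obtain $\vd_i = \sqrt{\tfrac{d+1}{d}}\,\bigl(e_i - \tfrac{1}{d+1}\mathbf{1}_{d+1}\bigr)$, which lie in the hyperplane $\mathbf{1}_{d+1}^{\perp}$, a copy of $\sR^d$. For a unit $\va$ in that hyperplane one gets $\langle\va,\vd_i\rangle = \sqrt{\tfrac{d+1}{d}}\,a_i$, so that $\max_{\vd}\langle\va,\vd\rangle = \sqrt{\tfrac{d+1}{d}}\,\max_i a_i$ subject to $\sum_i a_i = 0$ and $\sum_i a_i^2 = 1$. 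The remaining step is to lower-bound $\max_i a_i$ over this feasible set: writing $M = \max_i a_i$, the normalisation $\sum_i a_i^2 = 1$ is hardest to satisfy when the mass below $M$ is spread as evenly as possible, so the worst case is the profile with one coordinate equal to $M$ and the other $d$ equal to $-M/d$; imposing $\sum_i a_i^2 = 1$ on this profile pins $M$ down, and multiplying through by $\sqrt{(d+1)/d}$ delivers $\mu_P$ for the simplex. The only genuinely delicate point is rigorously justifying that this symmetric ``one-up, $d$-down'' profile is extremal, which I would do either by a Lagrange-multiplier analysis on $\{\sum_i a_i = 0,\ \sum_i a_i^2 = 1\}$ or by a short symmetrisation argument: replacing the $d$ non-maximal coordinates by their common average leaves $\max_i a_i$ and $\sum_i a_i$ unchanged while not increasing $\sum_i a_i^2$. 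Collecting the three computations gives the lemma, and $0 \le \mu_P \le 1$ follows since each $\mu_P$ is a value of $\langle\va,\vd\rangle$ with $\va,\vd$ unit vectors and is nonnegative by the argument above.
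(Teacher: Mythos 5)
Your overall strategy---reduce to unit $\va$ and bound the cosine measure family-by-family from explicit vertex coordinates---is legitimate and more concrete than the paper's argument, which shows positive spanning abstractly and then claims the min--max is attained at facet-centroid directions, so that $\mu_P$ equals the inradius-to-circumradius ratio quoted from Coxeter. Your orthoplex case is complete and correct: $\max_{\vd\in D^P}\langle\va,\vd\rangle=\norm{\va}_\infty\ge\norm{\va}/\sqrt{d}$. The other two cases, however, contain genuine gaps. For the $d$-cube your own computation gives $\max_{\vd}\langle\va,\vd\rangle=\norm{\va}_1/\sqrt{d}$, and the only valid one-line comparison, $\norm{\va}_1\ge\norm{\va}$, yields the constant $1/\sqrt{d}$, not $1/\sqrt{2}$; no comparison can do better, because for $\va=\ve_1$ every unit cube-vertex direction $\tfrac{1}{\sqrt d}(\pm1,\dots,\pm1)$ has inner product exactly $1/\sqrt{d}<1/\sqrt{2}$ once $d\ge3$. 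So the step ``a comparison between $\norm{\va}_1$ and $\norm{\va}$ produces the claimed constant'' fails; carried out honestly, your route shows the cube bound should be dimension-dependent ($1/\sqrt d$), and it thereby also exposes that the inradius value $1/\sqrt2$ used in the paper's own proof is not the inradius of a circumradius-one $d$-cube (which is $1/\sqrt d$).

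For the $d$-simplex you set up the right optimisation, $\min\max_i a_i$ subject to $\sum_i a_i=0$ and $\norm{\va}=1$ in the hyperplane model, but you identify the wrong extremal profile: ``one coordinate $M$, the other $d$ equal to $-M/d$'' is the vertex direction itself and \emph{maximises} the largest coordinate, giving $M=\sqrt{d/(d+1)}$ and hence, after your $\sqrt{(d+1)/d}$ rescaling, the absurd value $\mu_P=1$. The true minimiser is the opposite profile, $d$ coordinates equal to $M$ and one equal to $-dM$ (i.e.\ $\va$ pointing at a facet centroid, the negative of a vertex), which gives $M=1/\sqrt{d(d+1)}$ and $\max_{\vd}\langle\va,\vd\rangle=\sqrt{(d+1)/d}\,M=1/d\ \ge\ 1/\sqrt{d(d+1)}$, so the stated simplex constant does follow---but not from the profile or the symmetrisation you propose, which only averages the non-maximal coordinates and therefore cannot exclude the actual extremiser, in which $d$ coordinates are tied at the maximum. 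In short: the orthoplex case stands, the simplex case is salvageable after fixing the extremiser, and the cube case cannot be closed as stated for $d>2$.
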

\begin{proof}
    From~\cref{prop:pss}, $D^P$ is a positive spanning set, then for any $\va \in \sR^d$, $\exists \vd \in D^P$ such that $\langle \va, \vd \rangle > 0$ (Theorem 2.6,~\cite{regis2016properties}). This property results in the positive cosine measure of $D^P$(Proposition 7, ~\cite{konevcny2014simple})
    \begin{equation} \label{eq:cosin_measure}
        1 \geq \mu_P \defi \min_{\boldsymbol{0} \neq \va \in \sR^d} \max_{\vd \in D^P} \frac{\langle \va, \vd \rangle}{\norm{\va} \norm{\vd}} > 0
    \end{equation}

    Equivalently, $\mu_P$ is the largest scalar such that $\langle \va, \vd \rangle \geq \mu_P \norm{\va} \norm{\vd} = \mu_P \norm{\va},\,\vd \in D^P$.

    Next, due to the symmetry of the regular polytope family $\gP$, there exists an inscribing hypersphere $S^{d-1}_r$ with radius $r$ for any $P \in \gP$~\cite{coxeter1973regular}. For $\gP$, the tangent points of the inscribing hypersphere to the facets are also the centroid of the facets. Then, the centroid vectors pointing from the origin towards these tangent points form equal angles to all nearby vertex vectors. Thus, the cosine measure attains its saddle points~\cref{eq:cosin_measure} at these centroid vectors having the value
    \begin{equation*}
    \mu_P = \frac{r}{R}
    \end{equation*}
    with $R = 1$ is the radius of the circumscribed unit hypersphere. The inradius $r$ for $d$-simplex, $d$-orthoplex, and $d$-cube are $1 / \sqrt{d(d+1)}, 1 / \sqrt{d}, 1 / \sqrt{2}$, respectively~\cite{coxeter1973regular}.
\end{proof}
This lemma has a straightforward geometric implication - for every $\vv \neq \boldsymbol{0}, \vv \in \sR^d$, there exists a search direction $\vd \in D^P$ such that the cosine angle between these vectors is acute (i.e., $\mu_P > 0$). Then, if we consider the negative gradient vector, which is unknown, there exists a direction in $D^P$ that approximates it well with $\mu_P$ being the quality metric (i.e., larger $\mu_P$ is better). The values of $\mu_P$ for each polytope type also confirm the intuition that, for $d$-cube with an exponential number of vertices $m = 2^d$ has a constant cosine measure, while the cosine measure of $d$-simplex having $m=d+1$ vertices scales $O(1 / d)$ with dimension. Now, we state the key lemma used to prove the main property of Sinkhorn Step.

\begin{lemma}[Key lemma] \label{lemma:key}
    If~\cref{asm:lipschitz} and~\cref{asm:square} holds, then $\forall \vx_k \in X_k,\,\forall k > 0$
    \begin{equation} \label{eq:key_lemma}
        f(\vx_{k + 1}) \leq f(\vx_k) - \mu_P \alpha_k \norm{\nabla f(\vx_k)} + \frac{L}{2} \alpha_k^2
    \end{equation}
\end{lemma}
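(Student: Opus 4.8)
The plan is to combine the $L$-smoothness descent inequality with the key geometric fact from \cref{lemma:polytope_bound}, after showing that under \cref{asm:square} the Sinkhorn Step from $\vx_k$ reduces to a plain step of length $\alpha_k$ along one of the polytope directions --- specifically, the direction that best aligns with the negative gradient.

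\medskip\noindent\textbf{Step 1: reduce the Sinkhorn Step to a single-direction step.} Under \cref{asm:square} we have $\vn=\vm=\mathbf{1}_n/n$ with $n=m$, and the entropic regularization vanishes, $\lambda\to 0$. In this limit the entropic-regularized OT problem in \cref{def:sinkhorn_step} converges to the unregularized OT, i.e.\ a linear assignment problem on doubly-stochastic matrices with uniform marginals, whose optimum is attained at a permutation matrix (Birkhoff's theorem). Hence for each optimizing point $\vx_k$, the barycentric projection $\textrm{diag}(\vn)^{-1}\mW^*_\lambda$ selects a single vertex direction $\vd_{j(k)}\in D^P$, and the update collapses to $\vx_{k+1}=\vx_k+\alpha_k\vd_{j(k)}$. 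Moreover, optimality of the assignment means $\vd_{j(k)}$ is (essentially) the direction minimizing the local cost $\mC_{i,j}=f(\vx_i+\alpha_k\vd_j)$ over $j$ --- I will need to argue that minimizing $f(\vx_k+\alpha_k\vd_j)$ is, up to $O(\alpha_k^2)$ terms, the same as maximizing $\langle -\nabla f(\vx_k),\vd_j\rangle$, so that the chosen direction inherits the guarantee of \cref{lemma:polytope_bound}. This is the step I expect to be the main obstacle, because it requires carefully handling the coupling between all $n$ points through the global assignment constraint; the cleanest route is probably to note that since $D^P$ is the \emph{same} direction set at every point and the marginals are uniform, the assignment decouples and each point independently picks its own cost-minimizing direction.

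\medskip\noindent\textbf{Step 2: apply $L$-smoothness.} With $\vx_{k+1}=\vx_k+\alpha_k\vd_{j(k)}$ and $\norm{\vd_{j(k)}}=1$, the standard descent lemma (a consequence of \cref{asm:lipschitz}) gives
\begin{equation*}
  f(\vx_{k+1}) \leq f(\vx_k) + \alpha_k\langle \nabla f(\vx_k),\vd_{j(k)}\rangle + \frac{L}{2}\alpha_k^2.
\end{equation*}

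\medskip\noindent\textbf{Step 3: bound the inner product via the cosine measure.} Applying \cref{lemma:polytope_bound} to $\va=-\nabla f(\vx_k)$ yields a direction $\vd\in D^P$ with $\langle -\nabla f(\vx_k),\vd\rangle\geq \mu_P\norm{\nabla f(\vx_k)}$. Since $\vd_{j(k)}$ is chosen to minimize $f(\vx_k+\alpha_k\vd_j)$ and, by Step 1, this is equivalent (up to the $O(\alpha_k^2)$ second-order term already absorbed into the $\frac{L}{2}\alpha_k^2$ bookkeeping) to maximizing $\langle -\nabla f(\vx_k),\vd_j\rangle$, we get $\langle \nabla f(\vx_k),\vd_{j(k)}\rangle \leq -\mu_P\norm{\nabla f(\vx_k)}$. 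Substituting into the inequality from Step 2 gives exactly
\begin{equation*}
  f(\vx_{k+1}) \leq f(\vx_k) - \mu_P\alpha_k\norm{\nabla f(\vx_k)} + \frac{L}{2}\alpha_k^2,
\end{equation*}
which is \cref{eq:key_lemma}. The only delicate point to get right in the write-up is making the second-order error in Step 1 rigorous: one option is to carry the exact inequality $f(\vx_k+\alpha_k\vd)\le f(\vx_k)+\alpha_k\langle\nabla f(\vx_k),\vd\rangle+\frac{L}{2}\alpha_k^2$ for the Lemma-guaranteed $\vd$, use that $\vd_{j(k)}$ has cost no larger than that of $\vd$, and then lower-bound $f(\vx_k+\alpha_k\vd_{j(k)})$ from below by $f(\vx_{k+1})$ itself --- i.e.\ work directly with function values rather than passing through gradients of the surrogate, which sidesteps the need to linearize the argmin.
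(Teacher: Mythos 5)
Your skeleton is essentially the paper's: use the $\lambda\to 0$ limit together with Birkhoff's theorem and the fundamental theorem of linear programming to turn the barycentric projection into a step of length $\alpha_k$ along polytope directions, then combine the $L$-smoothness descent inequality with \cref{lemma:polytope_bound}. Your closing suggestion --- compare function values, i.e.\ bound $f(\vx_k+\alpha_k\vd_{j(k)})\le f(\vx_k+\alpha_k\vd^\dagger)$ for the \cref{lemma:polytope_bound}-guaranteed direction $\vd^\dagger$ and apply smoothness only to $\vd^\dagger$ --- is actually cleaner than the paper's own Case~1, which passes from ``$\vs_k$ is the cost-minimizing vertex step'' directly to $\langle\nabla f(\vx_k),\vs_k\rangle\le-\alpha_k\mu_P\norm{\nabla f(\vx_k)}$, precisely the linearization of the argmin you rightly identify as non-trivial.

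Two points, however, are genuine gaps relative to the paper's proof. First, your resolution of the coupling obstacle is wrong as stated: with one \emph{shared} direction set and uniform marginals with $n=m$, the $\lambda\to 0$ limit is (a convex combination of) permutation matrices, i.e.\ a bijection between points and directions, so two points with the same cost-minimizing direction cannot both receive it --- sharing $D^P$ is exactly what prevents the assignment from decoupling. The paper instead argues ``there exists a vertex evaluation permutation forming the cost matrix such that'' the optimal plan hands each point a descending/minimizing vertex, exploiting that each point's polytope (and hence the column labelling of its row of $\mC$) is constructed locally; your one-line decoupling claim needs to be replaced by an argument of this kind rather than by uniformity of the marginals. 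Second, you drop the tie case, which the paper treats explicitly as Case~2: the Sinkhorn limit is the \emph{maximum-entropy} point of the optimal face, hence in general a convex combination of tied permutations, so the update need not be a single vertex step $\alpha_k\vd_{j(k)}$. Your function-value shortcut does not cover this case directly, because $f$ evaluated at a convex combination of tied vertices is not controlled by the vertex values without convexity of $f$; there one has to fall back on linearity of $\langle\nabla f(\vx_k),\cdot\rangle$ over the combination, as the paper does (or accept a weaker additive $O(L\alpha_k^2)$ constant obtained by first converting the tied function-value bounds into inner-product bounds via smoothness).
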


\begin{proof} 

If the~\cref{asm:square} holds, by Proposition 4.1 in~\cite{peyre2019computational}, the \gls{ot} solution $\mW_\lambda \rightarrow \mW_0$ converges to the optimal solution with maximum entropy in the set of solutions of the original problem 
\begin{equation*}
    \min_{\mW \in U(\mathbf{1}_n / n, \mathbf{1}_n / n)} \langle \mW, \mC \rangle.
\end{equation*}
Moreover, Birkhoff doubly stochastic matrix theorem~\cite{birkhoff1946tres} states that the set of extremal points of $U(\mathbf{1}_n / n, \mathbf{1}_n / n)$ is equal to the set of permutation matrices, and the fundamental theorem of linear programming (Theorem 2.7 in~\cite{bertsimas1997introduction}) states that the minimum of a linear objective in a finite non-empty polytope is reached at a vertex or a face of the polytope (i.e., the feasible space of the linear program), leading to the following two cases.
\begin{itemize}
    \item \textbf{Case 1}: $\mW_\lambda / n \rightarrow \mW_0 / n$ converges to a permutation matrix representing the bijective mapping between the optimizing points and the polytope vertices. There exists a vertex evaluation permutation forming the cost matrix such that, the update step $\vs_k$ is a descending step for each optimizing point
    \begin{equation} \label{eq:argmin_step}
        \forall \vx_k \in X_k, \,\vs_k = \alpha_k \frac{1}{n} \vw^*_0 \mD^P = \argmin_{\vd \in D^P} \{ f(\vx_k + \alpha_k \vd)\}
    \end{equation}
    with $\vw^*_0$ as a row in $\mW^*_0$, then $\vw^*_0 / n$ is a one-hot vector. Let $\va = -\nabla f(\vx_k)$, $\vs_k$ is a descending step $f(\vx_k + \vs_k) \leq f(\vx_k)$, then, by~\cref{lemma:polytope_bound}, $\langle \nabla f(\vx_k), \vs_k \rangle \leq - \alpha_k \mu_P \norm{\nabla f(\vx_k)}$.
    \item \textbf{Case 2}:  $\mW_\lambda / n \rightarrow \mW_0 / n$ converges to a linear interpolation between the permutation matrices defining the neighboring vertices of the polytope. In this case, there are infinite solutions as the linear interpolation between the two bijective maps. There still exists a vertex evaluation permutation forming the cost matrix such that, the update step $\vs_k$  is the linear interpolation of multiple tied descending steps for each optimizing point, with $\vs_k = \sum_i b_i \vd_i,\, \sum_i b_i = 1,\, b_i \geq 0,\, \vd_i = \argmin_{\vd \in D^P} \{ f(\vx_k + \alpha_k \vd) \}$. Following the argument of Case 1, since $\vs_k$ is the linear interpolation of descending steps, we also conclude that $\langle \nabla f(\vx_k), \vs_k \rangle = \sum_i b_i \langle \nabla f(\vx_k), \vd_i \rangle \leq - \sum_i b_i \alpha_k \mu_P \norm{\nabla f(\vx_k)} = - \alpha_k \mu_P \norm{\nabla f(\vx_k)}$. 
\end{itemize}

Finally, starting the L-smooth property of $f$, we can write
\begin{align}
\begin{split}
    \forall \vx \in X,\,\forall k > 0,\,f(\vx_{k + 1}) = f(\vx_k + \vs_k) &\leq f(\vx_k) + \langle \nabla f(\vx_k), \vs_k \rangle + \frac{L}{2} \norm{\vs_k}^2 \\
    &\leq f(\vx_k) - \mu_P \alpha_k \norm{\nabla f(\vx_k)} + \frac{L}{2} \alpha_k^2 \\
\end{split}
\end{align}
recalling that $\norm{\vd} = 1, \forall \vd \in D^P$.
\end{proof}

If the sufficient decrease condition does not hold $f(\vx_k) - f(\vx_{k + 1}) < c\alpha_k^2$ with some $c > 0$, then the iteration is deemed unsuccessful. In fact,~\cref{lemma:key} is similar to (Lemma 10,~\cite{konevcny2014simple}), which states that the gradients for these unsuccessful iterations are bounded above by a scalar multiplied with the stepsize. We can see this by rewriting~\cref{eq:key_lemma} as
\begin{align*}
\begin{split}
    \norm{\nabla f(\vx_k)} &\leq \frac{1}{\mu_P} \left( \frac{f(\vx_k) - f(\vx_{k + 1})}{\alpha_k} + \frac{L}{2} \alpha_k \right) \\
    &< \frac{1}{\mu_P} \left( c + \frac{L}{2} \right) \alpha_k.
\end{split}
\end{align*}
We can implement a check if the sufficient decrease condition holds for ensuring monotonicity in each iteration, as a variant of the Sinkhorn Step.

\cref{lemma:key} also enables analyzing each optimizing point separately, and hence we can state the following main theorem separately for each $\vx_k \in X_k$. 

\begin{theorem} [Main result]
    If~\cref{asm:lipschitz} and~\cref{asm:square} holds at each iteration and the stepsize is sufficiently small $\alpha_k = \alpha$ with $0 < \alpha < 2\mu_P \eps / L$, then with a sufficient number of iteration 
    \begin{equation*}
        K \geq k(\eps) \defi \frac{f(\vx_0) - f_*}{(\mu_P \eps - \frac{L \alpha}{2}) \alpha}
 - 1,
    \end{equation*}
    we have $\min_{0 \leq k \leq K} \norm{\nabla f(\vx_k)} \leq \eps,\,\forall \vx_k \in X_k$.
\end{theorem}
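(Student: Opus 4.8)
The plan is to sum the per-iteration descent inequality from the Key Lemma (\cref{lemma:key}) over $k = 0, \ldots, K$ and derive a contradiction with the boundedness of $f$ if the gradient never gets small. First I would suppose, for contradiction, that $\norm{\nabla f(\vx_k)} > \eps$ for every $k$ with $0 \leq k \leq K$. Since $\alpha_k = \alpha$ is constant, the Key Lemma gives, for each such $k$,
\begin{equation*}
    f(\vx_{k+1}) \leq f(\vx_k) - \mu_P \alpha \norm{\nabla f(\vx_k)} + \frac{L}{2}\alpha^2 < f(\vx_k) - \mu_P \alpha \eps + \frac{L}{2}\alpha^2 = f(\vx_k) - \left(\mu_P \eps - \frac{L\alpha}{2}\right)\alpha.
\end{equation*}
The constraint $0 < \alpha < 2\mu_P \eps / L$ is exactly what makes the coefficient $\left(\mu_P \eps - \tfrac{L\alpha}{2}\right)\alpha$ strictly positive, so this is a genuine strict decrease by a fixed amount at every step.

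Next I would telescope: summing the displayed inequality from $k=0$ to $k=K$ yields
\begin{equation*}
    f(\vx_{K+1}) < f(\vx_0) - (K+1)\left(\mu_P \eps - \frac{L\alpha}{2}\right)\alpha.
\end{equation*}
Using the lower bound $f(\vx_{K+1}) \geq f_*$ from \cref{asm:lipschitz}, rearranging gives
\begin{equation*}
    (K+1)\left(\mu_P \eps - \frac{L\alpha}{2}\right)\alpha < f(\vx_0) - f_*,
\end{equation*}
i.e. $K + 1 < \dfrac{f(\vx_0) - f_*}{\left(\mu_P \eps - \frac{L\alpha}{2}\right)\alpha}$, which contradicts the hypothesis $K \geq k(\eps) = \dfrac{f(\vx_0) - f_*}{\left(\mu_P \eps - \frac{L\alpha}{2}\right)\alpha} - 1$. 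Hence there must exist some $k$ in $\{0,\ldots,K\}$ with $\norm{\nabla f(\vx_k)} \leq \eps$, which is the claim $\min_{0 \leq k \leq K}\norm{\nabla f(\vx_k)} \leq \eps$. Since the Key Lemma holds for every $\vx_k \in X_k$ independently, the argument applies to each optimizing point separately, giving the stated $\forall \vx_k \in X_k$ conclusion.

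This is a standard nonconvex first-order complexity argument, so I do not expect real obstacles here; all the heavy lifting (the $\mu_P$ cosine-measure bound, the collapse of the entropic OT plan to a permutation/interpolation of descending steps under \cref{asm:square}) has already been absorbed into \cref{lemma:polytope_bound} and \cref{lemma:key}. The only points requiring a little care are: making sure the sign condition on $\alpha$ is invoked so the per-step decrease is strictly positive; correctly counting $K+1$ terms in the telescoping sum so the off-by-one in $k(\eps)$ comes out right; and noting that the minimum over $k$ is what the contradiction produces (we do not control which iterate is good, only that one exists). No additional assumptions on $f$ beyond $L$-smoothness and boundedness below are needed, consistent with the remark following the theorem statement.
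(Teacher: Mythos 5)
Your proposal is correct and follows essentially the same route as the paper: a proof by contradiction that assumes $\norm{\nabla f(\vx_k)} > \eps$ for all $k \leq K$, telescopes the descent inequality from \cref{lemma:key} with constant stepsize, and invokes the lower bound $f_*$ together with $K \geq k(\eps)$ to reach the contradiction. The sign condition on $\alpha$, the $K+1$ counting, and the per-point application of the key lemma are all handled just as in the paper's argument.
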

\begin{proof}
    We attempt the proof by contradiction, thus we assume $\norm{\nabla f(\vx_k)} > \eps$ for all $k \leq k(\eps)$. From~\cref{lemma:key}, we have $\forall \vx_k \in X_k,\,\forall k > 0$
    \begin{equation*}
        f(\vx_{k + 1}) \leq f(\vx_k) - \mu_P \alpha \norm{\nabla f(\vx_k)} + \frac{L}{2} \alpha^2.
    \end{equation*}
    From~\cref{asm:lipschitz}, the objective is bounded below $f_* \leq f(\vx)$. Hence, we can write
    \begin{align}
    \begin{split}
         f_* \leq f(\vx_{K+1}) &< f(\vx_K) - \mu_P \alpha \norm{\nabla f(\vx_K)} + \frac{L}{2} \alpha^2 \\
         &\leq f(\vx_{K - 1}) - \mu_P \alpha (\norm{\nabla f(\vx_K)} + \norm{\nabla f(\vx_{K - 1})}) + 2 \frac{L}{2} \alpha^2 \\
         &\leq f(\vx_{0}) - \mu_P \alpha \sum_{k=0}^K \norm{\nabla f(\vx_k)} + (K + 1) \frac{L}{2} \alpha^2 \\
         &\leq f(\vx_{0}) - (K + 1) \mu_P \alpha \eps + (K + 1) \frac{L}{2} \alpha^2 \\
         &\leq f(\vx_{0}) - (K + 1) (\mu_P \alpha \eps - \frac{L}{2} \alpha^2) \\
         &\leq f(\vx_{0}) - (f(\vx_{0}) - f_*) \\
         &= f_*
    \end{split}
    \end{align}
    by applying recursively~\cref{lemma:key} and the iteration lower bound at the second last line, which is a contradiction $f_* \leq f_*$. Hence, $\norm{\nabla f(\vx_k)} \leq \eps$ for some $k \leq k(\eps)$.
\end{proof}
If $L$ is known, we can compute the optimal stepsize $\alpha = \mu_P \eps / L$. Then, the complexity bound is $k(\eps) = \frac{2L(f(\vx_0) - f_*)}{\mu_P^2 \eps^2} - 1$. Note that~\cref{thm:main} only guarantees the gradient of some points in the sequence of Sinkhorn Steps will be arbitrarily small. If in practice, we implement the sufficient decreasing condition $f(\vx_k) - f(\vx_{k + 1}) \geq c\alpha_k^2$, then $f(\vx_K) \leq f(\vx_i),\, \norm{\nabla f(\vx_i)} \leq \eps$ holds. However, this sufficient decrease check may waste some iterations and worsen the performance. We show in the experiments that the algorithm exhibits convergence behavior without this condition checking. Finally, we remark on the complexity bounds when using different polytope types for Sinkhorn Step under~\cref{asm:lipschitz} and~\cref{asm:square}, by substituting $\mu_P$ according to~\cref{lemma:polytope_bound}.

\begin{remark}
    By~\cref{thm:main}, with the optimal stepsize $\alpha = \mu_P \eps / L$, the complexity bounds for $d$-simplex, $d$-orthoplex and $d$-cube are $O(d^2 / \eps^2)$, $O(d / \eps^2)$, and $O(1 / \eps^2)$, respectively.
\end{remark}

The optimal stepsize with $d$-simplex reports the same complexity $O(d^2 / \eps^2)$ as the best-known bound for directional-direct search~\cite{vicente2013worst}. Within the directional-direct search scope, $d$-cube reports the new best-known complexity bound $O(1 / \eps^2)$, which is independent of dimension $d$ since the number of search directions is also increased exponentially with dimension.  However, in practice, solving a batch update with $d$-cube for each iteration is expensive since now the column-size of the cost matrix is $2^d$.

\section{Gaussian Process Trajectory Prior}  \label{app:gp_prior}

To provide a trajectory prior with tunable time-correlated covariance for trajectory optimization, either as initialization prior or as cost, we introduce a prior for continuous-time trajectories using a \gls{gp}~\cite{rasmussen2003gaussian, barfoot2014batch, mukadam2018continuous}: $\vtau \sim \mathcal{\gls{gp}}(\vmu(t),\, \mK(t,t'))$, with mean function $\vmu$ and covariance function $\mK$. As described in \cite{barfoot2014batch, sarkka2013spatiotemporal, urain2022learning}, a \gls{gp} prior can be constructed from a linear time-varying stochastic differential equation
\begin{align}\label{eq:gp_dyn}
    \dot{\vx} = \mA(t)\vx(t) + \vu(t) + \mF(t)\mathbf{w}(t)
\end{align}
with $\vu(t)$ the control input, $\mA(t)$ and $\mF(t)$ the time-varying system matrices, and $\mathbf{w}(t)$ a disturbance following the white-noise process $\mathbf{w}(t) \sim \mathcal{\gls{gp}}(\boldsymbol{0}, \mQ_c \delta(t-t'))$, where $\mQ_c \succ 0$ is the power-spectral density matrix. With a chosen discretization time $\Delta t$, the continuous-time \gls{gp} can be parameterized by a mean vector of Markovian support states $\vmu =[\vmu(0), ..., \vmu(T)]^\intercal$ and covariance matrix ${\mK = [\mK(i, j)]_{ij, 0\leq i,j\leq T},\,\mK(i, j) \in \sR^{d \times d}}$, resulting in a multivariate Gaussian $q(\vtau) = \mathcal{N}(\vmu, \mK)$. The inverse of the covariance matrix has a sparse structure $\mK^{-1} = \mD^\intercal \mQ^{-1} \mD$ (Theorem 1 in~\cite{barfoot2014batch}) with
\begin{equation}
    \mD = \begin{bmatrix}
    \mI    &     &   &   & \\
    -\mathbf{\Phi}_{1,0} & \mI &   &   & \\
     &  &  \dots   &  &\\
     &  &  & \mI &  \mathbf{0}\\
     &  &  &  -\mathbf{\Phi}_{T, T-1} & \mI \\
    &  &  &  \mathbf{0} & \mI 
\end{bmatrix},
\label{eq:B}
\end{equation}
and the block diagonal time-correlated noise matrix ${\mQ^{-1} = \textrm{diag}(\vSigma_s^{-1}, \mQ_{0, 1}^{-1}, \ldots, \mQ_{T-1, T}^{-1}, \vSigma_g^{-1})}$. Here, $\mathbf{\Phi}_{t,t+1} $ is the state transition matrix, $\mQ_{t,t+1}$ the covariance between time step $t$ and $t + 1$, and $\vSigma_s, \vSigma_g$ are the chosen covariance of the start and goal states. In this work, we mainly consider the constant-velocity prior (i.e., white-noise-on-acceleration model $\ddot{\mathbf{x}}(t) = \mathbf{w}(t)$), which can approximately represent a wide range of systems such as point-mass dynamics, gravity-compensated robotics arms~\cite{mukadam2018continuous}, differential drive~\cite{barfoot2014batch}, etc., while enjoying its sparse structure for computation efficiency. As used in our paper, this constant-velocity prior can be constructed from~\cref{eq:gp_dyn} with the Markovian state representation $\vx = [\mathbf{x}, \dot{\mathbf{x}}] \in \sR^d$ and 
\begin{equation} \label{eq:ltv-sde}
\mA(t) = \begin{bmatrix}
\mathbf{0} & \mI_{d/2} \\ 
\mathbf{0} & \mathbf{0}
\end{bmatrix}, \quad \vu = \mathbf{0}, \quad \mF(t) = \begin{bmatrix}
\mathbf{0} \\ 
\mI_{d/2}
\end{bmatrix}
\end{equation}
Then, following~\cite{barfoot2014batch, mukadam2018continuous}, the state transition and covariance matrix are
\begin{equation} \label{eq:constant_vel}
\mathbf{\Phi}_{t,t+1} = \begin{bmatrix}
\mI_{d/2} & \Delta t\mI_{d/2} \\ 
\mathbf{0} & \mI_{d/2}
\end{bmatrix}, \quad
\mQ_{t,t+1} = \begin{bmatrix}
\frac{1}{3} \Delta t^3 \mQ_c &
\frac{1}{2} \Delta t^2 \mQ_c \\ 
\frac{1}{2} \Delta t^2 \mQ_c &
\Delta t \mQ_c
\end{bmatrix}
\end{equation}
with the inverse 
\begin{equation} \label{eq:inverse_Q}
\mQ_{t,t+1}^{-1} = \begin{bmatrix}
12 \Delta t^{-3} \Delta t^3 \mQ_c^{-1} &
-6 \Delta t^{-2} \Delta t^3 \mQ_c^{-1} \\ 
-6 \Delta t^{-2} \Delta t^3 \mQ_c^{-1} &
4 \Delta t^{-1} \mQ_c^{-1} 
\end{bmatrix},
\end{equation}
which are used to compute $\mK$.

Consider priors on start state $q_s(\vx)=\mathcal{N}(\vmu_s, \vSigma_s)$ and goal state $q_g(\vx)=\mathcal{N}(\vmu_g, \vSigma_g)$, the \gls{gp} prior for discretized trajectory can be factored as follows~\cite{barfoot2014batch, mukadam2018continuous}
\begin{align} \label{eq:gp_prior_extended}
    \begin{split}
        q_F(\vtau) \, &\propto \exp\big(-\frac{1}{2}\norm{\vtau - \vmu}^2_{\mK^{-1}}\big) \\
                      &\propto  q_s(\vx_0)\ q_g(\vx_T) \prod_{t=0}^{T-1} q_t (\vx_t, \vx_{t+1}),
    \end{split}
\end{align}
where each binary \gls{gp}-factor is defined
\begin{align}
\footnotesize{
    q_t (\bm\vx_t, \bm\vx_{t+1}) = \exp \Big\{ \hspace{-1mm}-\frac{1}{2} \| \mathbf{\Phi}_{t,t+1} (\vx_t - \vmu_t) - (\vx_{t+1} - \vmu_{t+1}) \|^2_{\mQ_{t,t+1}^{-1}} \Big\}.
}
\end{align}
In the main paper, we use this constant-velocity \gls{gp} formulation to sample initial trajectories. The initialization \gls{gp} is parameterized by the constant-velocity straight line $\vmu_0$ connecting a start configuration $\vmu_s$ to a goal configuration $\vmu_g$, having moderately high covariance $\mK_0$. For using this \gls{gp} as the cost, we set the zero-mean $\vmu=\mathbf{0}$ to describe the uncontrolled trajectory distribution. The conditioning $q_g(\vx_T)$ of the final waypoint to the goal configuration $\vmu_g$ is optional (e.g., when the goal configuration solution from inverse kinematics is sub-optimal), and we typically use the $SE(3)$ goal cost.

\section{Additional Discussions Of Batch Trajectory Optimization} \label{app:additional_discussion}

\textbf{Direct implications of batch trajectory optimization}. \gls{otmp} can be used as a strong oracle for collecting datasets due to the solution diversity covering various modes, capturing homotopy classes of the tasks and their associated contexts. For direct execution, with high variance initialization, an abundance of solutions vastly increases the probability of discovering good local minima, which we can select the best solution according to some criteria, e.g., collision avoidance, smoothness, model consistency, etc. 

\textbf{Solution diversity of \gls{otmp}}. Batch trajectory optimization can serve as a strong oracle for collecting datasets or striving to discover a global optimal trajectory for execution. Three main interplaying factors contribute to the solution diversity, hence discovering better solution modes. They are

\begin{itemize}
    \item the step radius $\alpha_k$ annealing scheme,
    \item the variances of \gls{gp} prior initialization,
    \item the number of plans in a batch.
\end{itemize}

Additional sampling mechanism that promotes diversity, such as Stein Variational Gradient Descent (SVGD)~\cite{liu2016stein} can be straightforwardly integrated into the trajectory optimization problem~\cite{lambert2020stein}. This is considered in the future version of this paper to integrate the SVGD update rule with the Sinkhorn Step (i.e., using the Sinkhorn Step to approximate the score function) for even more diverse trajectory planning.

\textbf{Extension to optimizing batch of different trajectory horizons}. Currently, for vectorizing the update of all waypoints across the batch of trajectories, we flatten the batch and horizon dimensions and apply the Sinkhorn Step. After optimization, we reshape the tensor to the original shape. Notice that what glues the waypoints in the same trajectory together after optimization is the log of the Gaussian Process as the model cost, which promotes smoothness and model consistency. Given this pretext, in case of a batch of different horizon trajectories, we address this case by setting maximum horizon $T_{\textrm{max}}$ and padding with zeros for those trajectories having $T < T_{\textrm{max}}$. Then, we also set zeros for all rows corresponding to these padded points in the cost matrix $\mathbf{C}^{T_{\textrm{max}} \times m}$. The padded points are ignored after the barycentric projection. Another way is to maintain an index list of start and end indices of trajectories after flattening, then the cost computation also depends on this index list. Finally, the trajectories with different horizons can be extracted based on the index list. Intuitively, we just need to manipulate cost entries to dictate the behavior of waypoints.

\section{Explicit Trust Region Of The Sinkhorn Step} \label{app:explicit_trust_region}

In trajectory optimization, it is crucial to bound the trajectory update at every iteration to be close to the previous one for stability, and so that the updated parameter remains within the region where the linear approximations are valid. Given $\gF(\cdot): \sR^{T \times d} \rightarrow \sR$ to be the planning cost functional, prior works~\cite{ratliff2009chomp, marinho2016functional, mukadam2018continuous} apply a first-order Taylor expansion at the current parameter $\vtau_k$, while adding a regularization norm
\begin{equation}\label{eq:trust_region}
\small{
\Delta\vtau^* = \argmin \left\{ \gF(\vtau_k) + \nabla \gF(\vtau_k)\Delta\vtau + \frac{\beta}{2}\norm{\Delta\vtau_k}_{\mM} \right\},}
\end{equation}
resulting in the following update rule by differentiating the right-hand
side w.r.t. $\Delta\vtau$ and setting it to zero
\begin{equation}
\vtau_{k + 1} = \vtau_k + \Delta\vtau^* = \vtau_k - \frac{1}{\beta} \mM^{-1} \nabla \gF(\vtau_k).
\end{equation}
The metric $\mM$ depends on the conditioning prior.~\citet{ratliff2009chomp} propose $\mM$ to be the finite difference matrix, constraining the update to stay in the region of smooth trajectories (i.e., low-magnitude trajectory derivatives).~\citet{mukadam2018continuous} use the metric $\mM = \mK$ derived from a \gls{gp} prior, also enforcing the dynamics constraint. It is well-known that solving for $\Delta\vtau$ in~\cref{eq:trust_region} is equivalent to minimizing the linear approximation within the ball of radius defined by the third term (i.e., the regularization norm)~\cite{boyd2004convex}. Hence, these mechanisms can be interpreted as implicitly shaping the \textit{trust region} - biasing perturbation region by the prior, connecting the prior to the weighting matrix $\mM$ in the update rule.

In contrast, the Sinkhorn Step approaches the \textit{trust region} problem with a gradient-free perspective and provides a novel way to explicitly constrain the parameter updates inside a trust region defined by the regular polytope, without relying on Taylor expansions, where cost functional derivatives are not always available in practice (e.g., planning with only occupancy maps, planning through contacts). In this work, the bound on the trajectory update by the Sinkhorn Step is straightforward 
\begin{align}\label{eq:traj_update_bound}
\begin{split}
    \norm{\vtau_{k + 1} - \vtau_k} &= \norm{\alpha_k \textrm{diag}(\vn)^{-1} \mW^*_\lambda \mD^P} \\
                                   &\leq \sum_{t=1}^T \norm{\alpha_k \frac{1}{n} \vw^*_\lambda \mD^P} \\
                                   &\leq \sum_{t=1}^T \norm{\alpha_k \vd^*} \leq T \alpha_k
\end{split}
\end{align}
resulting from $\mD^P$ being a regular polytope inscribing the $(d-1)$-unit hypersphere. In practice, one could scale the polytope in different directions by multiplying with $\mM$ induced by priors, and, hence, shape the trust region in a similar fashion. Note that the bound in~\cref{eq:traj_update_bound} does not depend on the local cost information.

For completeness of discussion, in sampling-based trajectory optimization, the regularization norm is related to the variance of the proposal distribution. The trajectory candidates are sampled from the proposal distribution and evaluated using the Model-Predictive Path Integral (MPPI) update rule~\cite{williams2017model}. For example,~\citet{kalakrishnan2011stomp} construct the variance matrix similarly to the finite difference matrix, resulting in a sampling distribution with low variance at the tails and high variance at the center. Recently,~\citet{urain2022learning} propose using the same \gls{gp} prior variance as in~\cite{mukadam2018continuous} to sample trajectory candidates for updates, leveraging them for tuning variance across timesteps.

\section{The Log-Domain Stabilization Sinkhorn Algorithm} \label{app:log_sinkhorn_knopp}

Following Proposition 4.1 in ~\cite{peyre2019computational}, for sufficiently small regularization $\lambda$, the approximate solution from the entropic-regularized \gls{ot} problem
$$
\mW^*_\lambda  = \argmin \OTlambda(\vn, \vm)
$$ 
approaches the true optimal plan 
$$
\mW^* = \argmin_{\mW\in U(\vn, \vm)} \langle \mC, \mW\rangle.
$$
However, small $\lambda$ incurs numerical instability for a high-dimensional cost matrix, which is usually the case for our case of batch trajectory optimization. Too high $\lambda$, which leads to ``blurry'' plans, also harms the \gls{otmp} performance. Hence, we utilize the log-domain stabilization for the Sinkhorn algorithm.

We provide a brief discussion of this log-domain stabilization. For a full treatment of the theoretical derivations, we refer to~\cite{chizat2018scaling, schmitzer2019stabilized}. First, with the marginals $\vn \in \Sigma_T,\,\vm \in \Sigma_m$ and the exponentiated kernel matrix $\mP = \exp(-\mC / \lambda)$, the Sinkhorn algorithm aims to find a pair of scaling factors $\vu \in \sR^T_{+},\, \vv \in \sR^m_{+}$ such that
\begin{equation} \label{eq:scaling_vec}
\vu \odot \mP\vv = \vn, \quad \vv \odot \mP^\intercal\vu = \vm,
\end{equation}
where $\odot$ is the element-wise multiplication (i.e., the Hadamard product). From a typical one vector initialization $\vv^0 = \mathbf{1}_m$, the Sinkhorn algorithm performs a sequence of (primal) update rules
\begin{equation} \label{eq:sinkhorn_knopp}
\vu^{i + 1} = \frac{\vn}{\mP \vv^i}, \quad \vv^{i + 1} = \frac{\vm}{\mP^\intercal\vu^{i + 1}},
\end{equation}
leading to convergence of the scaling factors $\vu^*, \vv^*$~\cite{sinkhorn1967diagonal}. Then, the optimal transport plan can be computed by ${\mW^*_\lambda = \textrm{diag}(\vu^*) \mP \textrm{diag}(\vv^*)}$.

\begin{figure}[t]
\removelatexerror
\begin{algorithm}[H]
\small
    \DontPrintSemicolon 
    \vspace{1pt}
    $(\va^0, \vb^0) \leftarrow (\mathbf{0}_T, \mathbf{0}_m),\; (\Tilde{\vu}^0, \Tilde{\vv}^0) \leftarrow (\mathbf{1}_T, \mathbf{1}_m), \; M = 10^3$.\\
    Compute stabilized kernel $\mP^0$ using~\cref{eq:stabilized_kernel}.\\
    \While{termination criteria not met} {
        \vspace{2pt}
        \tcp{Sinkhorn iteration}
        $\Tilde{\vu}^{i + 1} = \vn / (\mP^i \Tilde{\vv}^i), \quad \Tilde{\vv}^{i + 1} = \vm / ({\mP^i}^\intercal \Tilde{\vu}^{i + 1})$.\\
        \tcp{Check for numerical instabilities}
        \If{$\norm{\Tilde{\vu}^i}_{\infty} < M \lor \norm{\Tilde{\vv}^i}_{\infty} < M$} {
            \tcp{Absorption.}
            $(\va^i, \vb^i) \leftarrow \big(\va^i + \lambda \log (\Tilde{\vu}^i),\;\vb^i + \lambda \log (\Tilde{\vv}^i)\big)$.\\
            Compute stabilized kernel $\mP^i$ using~\cref{eq:stabilized_kernel}.\\
            $(\Tilde{\vu}^i, \Tilde{\vv}^i) \leftarrow (\mathbf{1}_T, \mathbf{1}_m)$.\\
        }
        \vspace{-2pt}
    }
    Return $\mW^*_\lambda = \textrm{diag}(\Tilde{\vu}^*) \mP^* \textrm{diag}(\Tilde{\vv}^*)$.\\
\caption{Stabilized Sinkhorn Algorithm}
\label{alg:log_stable_sk}
\end{algorithm}%
\end{figure}

For small values of $\lambda$, the entries of $\mP, \vu, \vv$ become either very small or very large, thus being susceptible to numerical problems (e.g., floating point underflow and overflow). To mitigate this issue, at an iteration $i$,~\citet{chizat2018scaling} suggests a redundant parameterization of the scaling factors as
\begin{equation} \label{eq:redundant_scaling}
\vu^{i} = \Tilde{\vu}^{i} \odot \exp(\va^{i} / \lambda), \quad \vv^{i} = \Tilde{\vv}^{i} \odot \exp(\vb^{i} / \lambda),
\end{equation}
with the purpose of keeping $\Tilde{\vu}, \Tilde{\vv}$ bounded, while absorbing extreme values of $\vu, \vv$ into the log-domain via redundant vectors $\va, \vb$. The kernel matrix $\mP^i$ is also stabilized, having elements being modified as
\begin{equation} \label{eq:stabilized_kernel}
  \mP^i_{tj}  = \exp \big( (\va^i_t + \vb^i_j - \mC_{tj}) / \lambda \big),
\end{equation}
such that large values in $\va, \vb$ and $\mC$ cancel out before the exponentiation, which is crucial for small $\lambda$. With these ingredients, we state the log-domain stabilization Sinkhorn algorithm in Algorithm~\ref{alg:log_stable_sk}. Note that Algorithm~\ref{alg:log_stable_sk} is mathematically equivalent to the original Sinkhorn algorithm, but the improvement in the numerical stability is significant.

Nevertheless, in practice, the extreme-value issues are still not resolved completely by Algorithm~\ref{alg:log_stable_sk} due to the exponentiation of the kernel matrix $\mP^i$. Moreover, we only check for numerical issues once per iteration for efficiency. Note that multiple numerical issue checks can be done in an iteration as a trade-off between computational overhead and stability. Hence, tuning for the cost matrix $\mC$ magnitudes and $\lambda$, for the values inside the $\exp$ function to not become too extreme, is still required for numerical stability.

\section{Uniform And Regular Polytopes}  \label{app:polytope}

We provide a brief discussion on the $d$-dimensional uniform and regular polytope families used in the paper (cf.~\cref{sec:sinkhorn_step}). For a comprehensive introduction, we refer to~\cite{coxeter1973regular, schulte2017symmetry}. In geometry, regular polytopes are the generalization in any dimensions of regular polygons (e.g., square, hexagon) and regular polyhedra (e.g., simplex, cube). The regular polytopes have their elements as $j$-facets ($ 0 \leq j \leq d$) - also called cells, faces, edges, and vertices - being transitive and also regular sub-polytopes of dimension $\leq d$~\cite{coxeter1973regular}. Specifically, the polytope's facets are pairwise congruent: there exists an isometry that maps any facet to any other facet.

To compactly identify regular polytopes, a \textit{Schläfli symbol} is defined as the form $\{a, b, c, ..., y, z\}$, with regular facets as $\{{a, b, c, ..., y}\}$, and regular vertex figures as $\{{b, c, ..., y, z}\}$. For example, 
\begin{itemize}
    \item a polygon having $n$ edges is denoted as $\{n\}$ (e.g., a square is denoted as $\{4\}$),
    \item a regular polyhedron having $\{n\}$ faces with $p$ faces joining around a vertex is denoted as $\{n, p\}$ (e.g., a cube is denoted as $\{4, 3\}$) and $\{p\}$ is its \textit{vertex figure} (i.e., a figure of an exposed polytope when one vertex is "sliced off"),
    \item a regular 4-polytope having cells $\{n, p\}$ with $q$ cells joining around an edge is denoted as $\{n, p, q\}$ having vertex figure $\{p, q\}$, and so on.
\end{itemize}  

A \textit{$d$-dimensional uniform polytope} is a generalization of a regular polytope - only retaining the vertex-transitiveness (i.e., only vertices are pairwise congruent), and is bounded by its uniform facets. In fact, nearly every uniform polytope can be constructed by Wythoff constructions, such as \textit{rectification}, \textit{truncation}, and \textit{alternation} from either regular polytopes or other uniform polytopes~\cite{schulte2017symmetry}. This implies a vast number of possible choices of vertex-transitive uniform polytopes that can be applied to the Sinkhorn Step. Further research in this direction is interesting.

\begin{figure}[b]
\vspace{-0.3cm}
\centering
  \begin{minipage}[b]{0.33\linewidth}
  \centering
    \includegraphics[width=\linewidth]{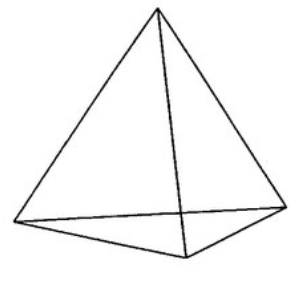}
  \end{minipage}%
  \begin{minipage}[b]{0.33\linewidth}
  \centering
    \includegraphics[width=\linewidth]{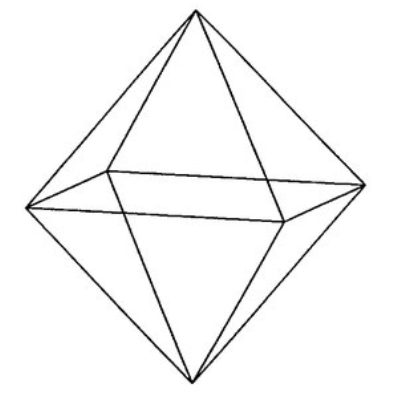}
  \end{minipage}%
  \begin{minipage}[b]{0.33\linewidth}
  \centering
    \includegraphics[width=\linewidth]{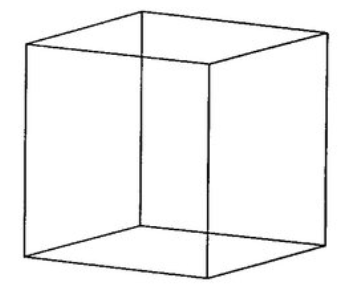}
  \end{minipage}
  \caption{Examples of (left to right) $3$-simplex, $3$-orthorplex, $3$-cube.}
\label{fig:polytope_3d}
\vspace{-0.6cm}
\end{figure}

We present three families of regular and uniform polytopes in~\cref{tab:polytope}, which are used in this work due to their construction simplicity (see~\cref{fig:polytope_3d}), and their existence for any dimension. Note that there are regular and uniform polytope families that do not exist in any dimension~\cite{coxeter1973regular}. The number of vertices is $n=d+1$ for a $d$-simplex, $n=2d$ for a $d$-orthoplex, and $n=2^d$ for a $d$-cube.

\begin{table}[tb]
\centering
\begin{center}
\captionof{table}{Regular and uniform polytope families.}
\label{tab:polytope}
\begin{tabular}{l
@{\extracolsep{\fill}}
c c c c}
\toprule
Dimension & Simplices & Orthoplexes & Hypercubes \\
\midrule
$d = 2$\;\;
& regular trigon $\{3\}$ & square $\{4\}$ & square $\{4\}$ \\
$d = 3$\;\;
& regular tetrahedron $\{3, 3\}$ & regular octahedron $\{3, 4\}$ & cube $\{4, 3\}$\\
Any $d$\;\;
& $d$-simplex $\{3^{d-1}\}$ & $d$-orthoplex $\{3^{d-2}, 4\}$ & $d$-cube $\{4^{d-2}, 3\}$\\
\bottomrule
\end{tabular}%
\end{center}
\vspace{-0.6cm}
\end{table}

\textbf{Construction}. We briefly discuss the vertex coordinate construction of $d$-regular polytopes $\gP$ inscribing a $(d-1)$-unit hypersphere with its centroid at the origin. Note that these constructions are GPU vectorizable. First, we denote the standard basis vectors $\ve_1, \ldots, \ve_d$ for $\sR^d$.

For a regular $d$-simplex, we begin the construction with the \textit{standard} $(d - 1)$-simplex, which is the convex hull of the standard basis vectors $\Delta^{d-1} = \{ \sum_{i=1}^d w_i \ve_i \in \sR^d \mid \sum_{i = 1}^d w_i = 1,\, w_i > 0,\, \textrm{for } i = 1, \ldots, d \}$. Now, we already got $d$ vertices with the pairwise distance of $\sqrt{2}$. Next, the final vertex lies on the line perpendicular to the barycenter of the standard simplex, so it has the form $(a / d, \ldots, a / d) \in \sR^d$ for some scalar $a$. For the final vertex to form regular $d$-simplex, its distances to any other vertices have to be $\sqrt{2}$. Hence, we arrive at two choices of the final vertex coordinate $\frac{1}{d}(1 \pm \sqrt{1 + d}) \mathbf{1}_d$. Finally, we shift the regular $d$-simplex centroid to zero and rescale the coordinate such that its circumradius is 1, resulting in two sets of $d + 1$ coordinates
\begin{equation}
    \left ( \sqrt{1 + \frac{1}{d}} \ve_i - \frac{1}{d\sqrt{d}}(1 \pm \sqrt{d + 1}) \mathbf{1}_d \right) \textrm{ for } 1 \leq i \leq d,\, \textrm{and } \frac{1}{\sqrt{d}} \mathbf{1}_d.
\end{equation}
Note that we either choose two coordinate sets by choosing $+$ or $-$ in the computation. 

For a regular $d$-orthoplex, the construction is trivial. The vertex coordinates are the positive-negative pair of the standard basis vectors, resulting in $2d$ coordinates
\begin{equation}
    \ve_1,\,-\ve_1,\,\ldots,\,\ve_d,\,-\ve_d
\end{equation}

For a regular $d$-cube, the construction is also trivial. The vertex coordinates are constructed by choosing each entry of the coordinate $1 / 2$ or $-1/2$, resulting in $2^d$ vertex coordinates. 

\section{d-Dimensional Random Rotation Operator}  \label{app:rot_opetator}

We describe the random $d$-dimensional rotation operator applied on polytopes mentioned in~\cref{sec:method}. Focusing on the computational perspective, we describe the rotation in any dimension through the lens of matrix eigenvalues. For any $d$-dimensional rotation, a (proper) rotation matrix $\mR \in \sR^{d \times d}$ acting on $\sR^d$ is an orthogonal matrix $\mR^\intercal = \mR^{-1}$, leading to $\textrm{det}(\mR) = 1$. Roughly speaking, $\mR$ does not apply contraction or expansion to the polytope convex hull $\textrm{vol}(\mD^P) = \textrm{vol}(\mD^P \mR)$.

For even dimension $d=2m$, there exist $d$ eigenvalues having unit magnitudes $\varphi = e^{\pm i \theta_l},\;l=1,\ldots,m$. There is no dedicated fixed eigenvalue $\varphi = 1$ depicting the axis of rotation, and thus no axis of rotation exists for even-dimensional spaces. For odd dimensions $d = 2m + 1$, there exists at least one fixed eigenvalue $\varphi=1$, and the axis of rotation is an odd-dimensional subspace. To see this, set $\varphi=1$ in $\textrm{det} (\mR - \varphi \mI)$ as follows
\begin{align}
\small
\begin{split}
    \textrm{det}(\mR - \mI) &= \textrm{det}(\mR^\intercal) \textrm{det} (\mR - \mI) = \textrm{det} (\mR^\intercal\mR - \mR^\intercal) \\
                             &= \textrm{det} (\mI - \mR) = (-1)^d \textrm{det} (\mR - \mI) = -\textrm{det} (\mR - \mI),
\end{split}
\end{align}
with $(-1)^d = -1$ for odd dimensions. Hence, $\textrm{det}(\mR - \mI) = 0$. This implies that the corresponding eigenvector $\vr$ of $\varphi=1$ is a fixed axis of rotation $\mR \vr = \vr$. When there are some null rotations in the even-dimensional subspace orthogonal to $\vr$, i.e., when fixing some $\theta_l = 0$, an even number of real unit eigenvalues appears, and thus the total dimension of rotation axis is odd. In general, the odd-dimensional $d = 2m + 1$ rotation is parameterized by the same number $m$ of rotation angles as in the $2m$-dimensional rotation. As a remark, in $d \geq 4$, there exist pairwise orthogonal planes of rotations, each parameterized by a rotation angle $\theta$. Interestingly, if we smoothly rotate a $4$-dimensional object from a starting orientation and choose rotation angle rates such that $\theta_1 = w\theta_2$ with $w$ is an irrational number, the object will never return to its starting orientation.

\textbf{Construction}. We only present random rotation operator constructions that are straightforward to vectorize. More methods on any dimensional rotation construction are presented in~\cite{hanson19954}. For an even-dimensional space $d=2m$, by observing the complex conjugate eigenvalue pairs, the rotation matrix can be constructed as a block diagonal of $2 \times 2$ matrices 
\begin{equation} \label{eq:2x2rot}
 \mR_l = 
\begin{bmatrix}
\textrm{cos}(\theta_l) & -\textrm{sin}(\theta_l) \\
\textrm{sin}(\theta_l) & \textrm{cos}(\theta_l)
\end{bmatrix},
\end{equation}
describing a rotation associated with the rotation angle $\theta_l$ and the pairs of eigenvalues $e^{\pm i \theta_l},\;l=1,\ldots,m$. In fact, this construction constitutes a \textit{maximal torus} in the special orthogonal group $SO(2m)$ represented as
$
T(m) = \{ \textrm{diag}(e^{i \theta_1}, \ldots, e^{i \theta_m}), \; \forall l, \, \theta_l \in \sR\},
$
describing the set of all simultaneous component rotations in any fixed choice of $m$ pairwise orthogonal rotation planes~\cite{adams1982lectures}. This is also a maximal torus for odd-dimensional rotations $SO(2m + 1)$, where the group action fixes the remaining direction. For instance, the maximal tori in $SO(3)$ are given by rotations about a fixed axis of rotation, parameterized by a single rotation angle. Hence, we construct a random $d \times d$ rotation matrix by first uniformly sampling the angle vector $\vtheta \in [0, 2\pi]^m$, then computing in batch the $2 \times 2$ matrices~\cref{eq:2x2rot}, and finally arranging them as block diagonal matrix.

Fortunately, in this paper, planning in first-order trajectories always results in an even-dimensional state space. Hence, we do not need to specify the axis of rotation. For general construction of a uniformly random rotation matrix in any dimension $d \geq 2$, readers can refer to the Stewart method~\cite{stewart1980efficient} and our implementation of Steward method at~\url{https://github.com/anindex/ssax/blob/main/ssax/ss/rotation.py#L38}.

\section{Related Works}
\noindent\textbf{Motion optimization.} 
While sampling-based motion planning algorithms have gained significant traction \cite{kavraki1996probabilistic,kuffner2000rrt}, they are typically computationally expensive, hindering their application in real-world problems. Moreover, these methods cannot guarantee smoothness in the trajectory execution, resulting in jerky robot motions that must be post-processed before executing them on a robot~\cite{park2014high}. To address the need for smooth trajectories, a family of gradient-based methods was proposed ~\cite{ratliff2009chomp,toussaint2014newton, mukadam2018continuous} for finding locally optimal solutions. These methods require differentiable cost functions, effectively requiring crafting or learning signed-distance fields of obstacles. \gls{chomp}~\cite{ratliff2009chomp} and its variants \cite{he2013multigrid,byravan2014space,marinho2016functional} optimize a cost function using covariant gradient descent over an initially suboptimal trajectory that connects the start and goal configuration. However, such approaches can easily get trapped in local minima, usually due to bad initializations. Stochastic trajectory optimizers, e.g., \gls{stomp}~\cite{kalakrishnan2011stomp} sample candidate trajectories from proposal distributions, evaluate their cost, and weigh them for performing updates~\cite{williams2017model, urain2022learning}. Although gradient-free methods can handle discontinuous costs (e.g., planning with surface contact), they may cause oscillatory behavior or failure to converge, requiring additional heuristics for acquiring better performance~\cite{bhardwaj2022storm}. \citet{schulman2014motion} addresses the computational complexity of \gls{chomp} and \gls{stomp}, which require fine trajectory discretization for collision checking, proposing a sequential quadratic program with continuous time collision checking. \gls{gpmp}~\cite{mukadam2018continuous} casts motion optimization as a probabilistic inference problem. A trajectory is parameterized as a function of continuous-time that maps to robot states, while a \gls{gp} is used as a prior distribution to encourage trajectory smoothness, and a likelihood function encodes feasibility. The trajectory is inferred via \gls{map} estimation from the posterior distribution of trajectories, constructed out of the \gls{gp} prior and the likelihood function. In this work, we perform updates on waypoints across multiple trajectories concurrently. This view is also considered in methods that resolve trajectory optimization via collocation~\cite{hargraves1987direct}.

\noindent\textbf{Optimal transport in robot planning.}~While \gls{ot} has several practical applications in problems of resource assignment and machine learning~\cite{peyre2019computational}, its application to robotics is scarce. Most applications consider swarm and multi-robot coordination~\cite{inoue2021optimal,krishnan2018distributed,bandyopadhyay2014probabilistic,kabir2021efficient,frederick2022collective}, while \gls{ot} can be used for exploration during planning~\cite{kabir2020receding} and for curriculum learning~\cite{klink2022curriculum}. A comprehensive review of \gls{ot} in control is available in~\cite{chen2021optimal}. Recently, \citet{le2023hierarchical} proposed a method for re-weighting Riemannian motion policies~\cite{hansel2023hierarchical} using an unbalanced \gls{ot} at the high level, leading to fast reactive robot motion generation that effectively escapes local minima.
 \label{app:related}

\section{Additional Experimental Details}  \label{app:experiments}

We elaborate on all additional experimental details omitted in the main paper. All experiments are executed in a single RTX3080Ti GPU and a single AMD Ryzen 5900X CPU. Note that due to the fact that all codebases are implemented in PyTorch (e.g., forward kinematics, planning objectives, collision checkings, environments, etc.), hence due to conformity reasons, we also implement RRT*/I-RRT* in PyTorch. However, we set using CPU when running RRT*/I-RRT* experiments and set using GPU for~\gls{otmp} and the other baselines. An example of Panda execution for collision checking in PyBullet is shown in~\cref{fig:robots}.

\begin{figure*}[tb]
\vspace{-0.3cm}
\centering
  \begin{minipage}[b]{0.49\textwidth}
  \centering
    \includegraphics[width=0.74\textwidth]{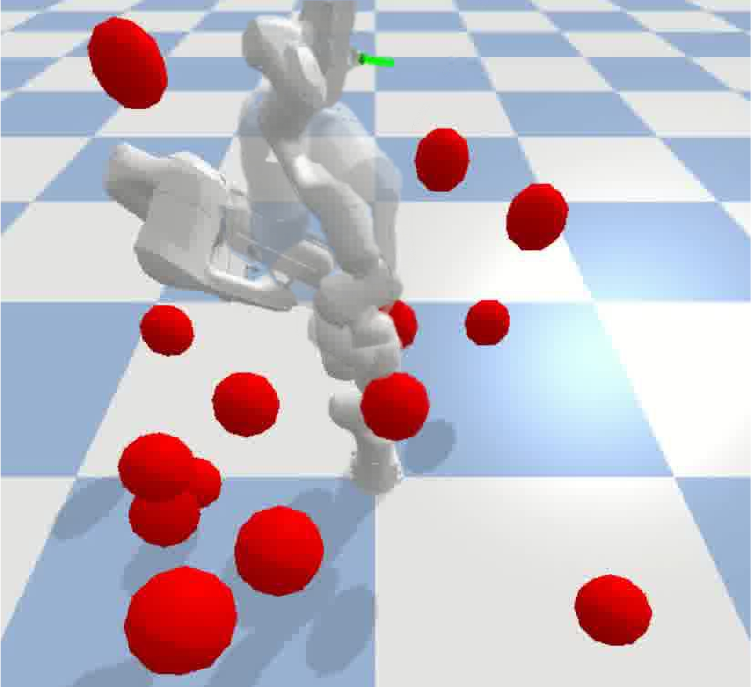}
  \end{minipage}
  \begin{minipage}[b]{0.49\textwidth}
  \centering
    \includegraphics[width=\textwidth]{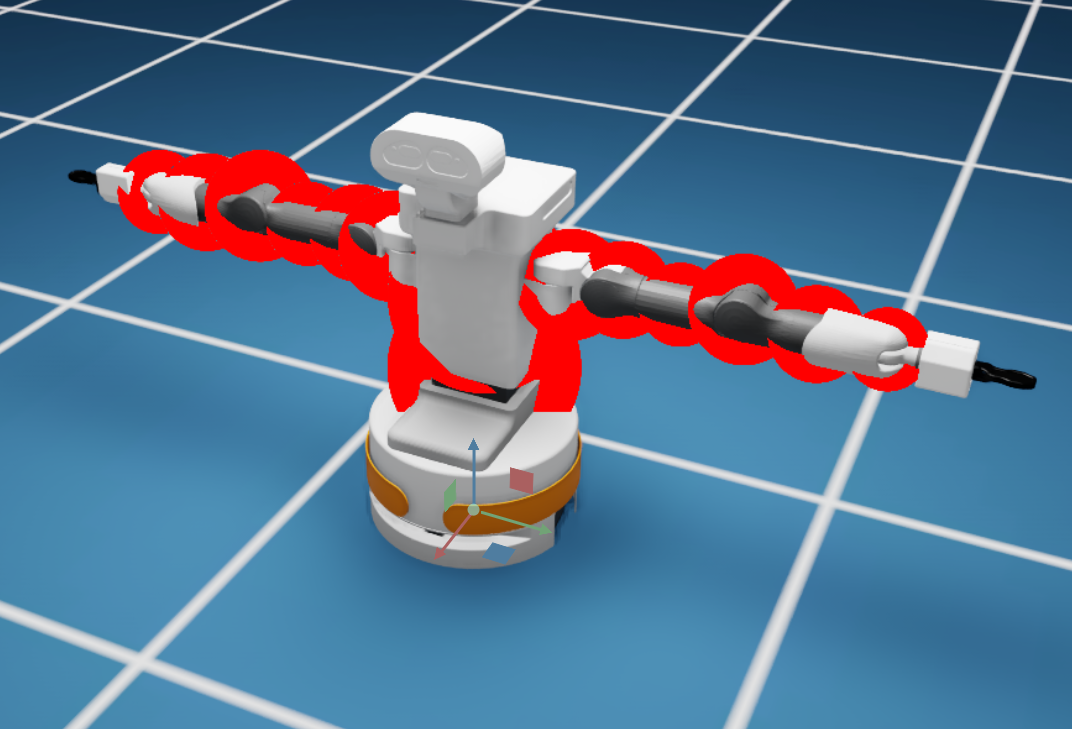}
  \end{minipage}
  \caption{(Left) An example of the Panda arm plan execution for three simulation frames. The green line denotes a $SE(3)$ goal. (Right) An example of red collision spheres attached to TIAGo++ mesh at a configuration. The collision spheres are transformed with the robot links via forward kinematics.}
\label{fig:robots}
\vspace{-0.6cm}
\end{figure*}

For a fair comparison, we construct the initialization \gls{gp} prior $\mathcal{N}(\vmu_0, \mK_0)$ with a constant-velocity straight line connecting the start and goal configurations, and sample initial trajectories for all trajectory optimization algorithms. We use the constant-velocity~\gls{gp} prior~\cref{eq:constant_vel}, both in the cost term and for the initial trajectory samples. To the best of our knowledge, the baselines are not explicitly designed for batch trajectory optimization. Striving for a unifying experiment pipeline and fair comparison, we reimplement all baselines in PyTorch with vectorization design  (beside RRT*) and fine-tune them with the parallelization setting, which is unavailable in the original codebases.

Notably, we use RRT*/I-RRT* as a feasibility indicator of the environments since they enjoy probabilistic completeness, i.e., at an infinite time budget if a solution exists these search-based methods will find the plan. Optimization-based motion planners, like \gls{otmp}, GPMP2, CHOMP, and STOMP are only local optimizers. Therefore, if a solution cannot be found by RRT*/I-RRT*, then it is not possible that optimization-based approaches can recover a solution.

\subsection{MPOT experiment settings}

For~\gls{otmp}, we apply $\eps$-annealing, normalize the configuration space limits (e.g., position limits, joint limits) into the $[-1, 1]$ range, and do the Sinkhorn Step in the normalized space.~\gls{otmp} is cost-sensitive due to exponential terms inside the Sinkhorn algorithm, hence, in practice, we normalize the cost matrix to the range $[0, 1]$. The~\gls{otmp} hyperparameters used in the experiments are presented in~\cref{tab:hyperparams_mpot}.

\begin{table}[tb]
\centering
\begin{center}
\captionof{table}{Experiment hyperparameters of~\gls{otmp}. $\alpha_0, \beta_0$ are the initial stepsize and probe radius. $h$ is the number of probe points per search direction. $eps$ is the annealing rate. $P$ is the polytope type, and $\lambda$ is the entropic scaling of~\gls{ot} problem.}
\label{tab:hyperparams_mpot}
\begin{tabular}{l
@{\extracolsep{\fill}}
c c c c}
\toprule
 & Point-mass & Panda & TIAGo++ \\
\midrule
$\alpha_0$\;\;
& $0.38$ & $0.03$ & $0.03$ \\
$\beta_0$\;\;
& $0.5$ & $0.15$ & $0.1$ \\
$h$\;\;
& $10$ & $3$ & $3$ \\
$\eps$\;\;
& $0.032$ & $0.035$ & $0.05$ \\
$P$\;\;
& $d$-cube & $d$-orthoplex & $d$-orthoplex \\
$\lambda$\;\;
& $0.01$ & $0.01$ & $0.01$ \\
\bottomrule
\end{tabular}%
\end{center}
\vspace{-0.6cm}
\end{table}

\subsection{Environments}

For the \textit{point-mass} environment, we populate $15$ square and circle obstacles randomly and uniformly inside x-y limits of $[-10, 10]$, with each obstacle having a radius or width of $2$ (cf.~\cref{fig:otmp_planar}). We generate $100$ environment-seeds, and for each environment-seed, we randomly sample $10$ collision-free pairs of start and goal states, resulting in $1000$ planning tasks. We plan each task in parallel $100$ trajectories of horizon $64$. A trajectory is considered successful if collision-free.

For the \textit{Panda} environment, we also generate $100$ environment-seeds. Each environment-seed contains randomly sampled $15$ obstacle-spheres having a radius of $10$cm inside the x-y-z limits of $[[-0.7, 0.7], [-0.7, 0.7], [0.1, 1.]]$ (cf.~\cref{fig:robots}), ensuring that the Panda's initial configuration has no collisions. Then, we sample $5$ random collision-free (including self-collision-free) configurations, we check with RRT\textsuperscript{*} the feasibility of solutions connecting initial and goal configurations, and then compute the $SE(3)$ pose of the end-effector as a possible goal. Thus, we create a total of $500$ planning tasks and plan in parallel $10$ trajectories containing $64$ timesteps. To construct the \gls{gp} prior, we first solve inverse kinematics (IK) for the $SE(3)$ goal in PyBullet, and then create a constant-velocity straight line to that goal. A trajectory is considered successful when the robot reaches the $SE(3)$ goal within a distance threshold with no collisions.

In the \textit{TIAGo++} environment, we design a realistic high-dimensional mobile manipulation task in PyBullet (cf.~\cref{fig:tiago_exp}). The task comprises two parts: the fetch part and place part; thus, it requires solving two planning problems. Each plan contains $128$ timesteps, and we plan a single trajectory for each planner due to the high computational and memory demands. We generate $20$ seeds by randomly spawning the robot in the room, resulting in $20$ tasks in total. To sample initial trajectories with the \gls{gp}, we randomly place the robot's base at the front side of the table or the shelf and solve IK using PyBullet. We designed a holonomic base for this experiment. A successful trajectory finds collision-free plans, successfully grasping the cup and placing it on the shelf.

\subsection{Metrics}

Comparing various aspects among different types of motion planners is challenging. We aim to benchmark the capability of planners to parallelize trajectory optimization under dense environment settings. We tune all baselines to the best performance possible for the respective experimental settings and then set the convergence threshold and a maximum number of iterations for each planner.

In all experiments, we consider $N_s$ environment-seeds and $N_t$ tasks for each environment-seed. For each task, we optimize $N_p$ plans having $T$ horizon. 

\textbf{Planning Time}. We aim to benchmark not only the success rate but also the \textit{parallelization quality} of planners. Hence, we tune all baselines for each experiment, and then measure the planning time T$[s]$ of trajectory optimizers until convergence or till maximum iteration is reached. T$[s]$ is averaged over $N_s \times N_t$ tasks.

\textbf{Success Rate}. We measure the success rate of task executions over environment-seeds. Specifically, $\textrm{SUC}[\%] = N_{st} / N_t \times 100$ with $N_{st}$ being the number of successful task executions (i.e., having at least a successful trajectory in a batch). The success rate is averaged over $N_s$ environment-seeds.

\textbf{Parallelization Quality}. We measure the parallelization quality, reflecting the success rate of trajectories in a single task. Specifically, $\textrm{GOOD}[\%] = N_{sp} / N_p \times 100$ with $N_{sp}$ being the number of successful trajectories in a task, and it is averaged over $N_s \times N_t$ tasks.

\textbf{Smoothness}. We measure changing magnitudes of the optimized velocities as smoothness criteria, reflecting energy efficiency. This measure can be interpreted as accelerations multiplied by the time discretization. Specifically, $\textrm{S} = \frac{1}{T}\sum_{t=0}^{T-1} \norm{\dot{\textbf{x}}_{t+1} - \dot{\textbf{x}}_{t}}$. S is averaged over successful trajectories in $N_s \times N_t$ tasks.

\textbf{Path Length}. We measure the trajectory length, reflecting natural execution and also smoothness. Specifically, $\textrm{PL} = \sum_{t=0}^{T-1} \norm{\textbf{x}_{t+1} - \textbf{x}_{t}}$. PL is averaged over successful trajectories in $N_s \times N_t$ tasks.

\subsection{Motion planning costs}

For the obstacle costs, we use an occupancy map having binary values for gradient-free planners (including \gls{otmp}) while we implement signed distance fields (SDFs) of obstacles for the gradient-based planners. For self-collision costs, we use the common practice of populating with spheres the robot mesh and transforming them with forward kinematics onto the task space~\cite{ratliff2009chomp, mukadam2018continuous}. To be consistent for all planners, joint limits are enforced as an L2 cost for joint violations. Besides the point-mass experiment, all collisions are checked by PyBullet. The differentiable forward kinematics implemented in PyTorch is used for all planners. 

\textbf{Goal Costs}. For the $SE(3)$ goal cost, given two points $\mT_1 = [\mR_1, \vp_1]$ and $\mT_2 =[\mR_2, \vp_2]$ in $SE(3)$, we decompose a translational and rotational part, and choose the following distance as cost
$
    d_{SE(3)}(\mT_1, \mT_2) = \norm{\vp_1 - \vp_2} + \norm{(\textrm{LogMap}(\mR_1^\intercal\mR_2))},
$
where $\textrm{LogMap}(\cdot)$ is the operator that maps an element of $SO(3)$ to its tangent space~\cite{sola2018micro}.

\textbf{Collision Costs}. Similar to \gls{chomp} and~\gls{gpmp}2, we populate $K$ \textit{collision spheres} on the robot body (shown in~\cref{fig:robots}). Given differentiable forward kinematics implemented in PyTorch (for propagating gradients back to configuration space), the obstacle cost for any configuration $\vq$ is
\begin{equation}
    C_{\textrm{obs}}(\vq) = \frac{1}{K}\sum_{j=1}^K c(\vx(\vq, S_j))
\end{equation}
with $\vx(\vq, S_j)$ is the forward kinematics position of the $j^{\text{th}}$-collision sphere, which is computed in batch. For gradient-based motion optimizers, we design the cost using the signed-distance function $d(\cdot)$ from the sphere center to the closest obstacle surface (plus the sphere radius) in the task space with a $\epsilon > 0$ margin 
\begin{equation}
    c(\vx) =
    \begin{cases}
        d(\vx) + \epsilon & \text{if } d(\vx) \geq -\epsilon\\
        0 & \text{if } d(\vx) < -\epsilon
    \end{cases}.
\end{equation}
For gradient-free planners, we discretize the collision spheres into fixed probe points, check them in batch with the occupancy map, and then average the obstacle cost over probe points.

\textbf{Self-collision Costs}. We group the collision spheres that belong to the same robot links. Then, we compute the pair-wise link sphere distances. The self-collision cost is the average of the computed pair-wise distances.

\textbf{Joint Limits Cost}. We also construct a soft constraint on joint limits (and velocity limits) by computing the L2 norm violation as cost, with a $\epsilon > 0$ margin on each dimension $i$
\begin{equation}
\small
    C_{\textrm{limits}}(q_i) =
    \begin{cases}
        \norm{q_{\textrm{min}} + \epsilon - q_i} & \text{if } q_i < q_{\textrm{min}} + \epsilon\\
        0 & \text{if } q_{\textrm{min}} + \epsilon \leq q_i \leq q_{\textrm{max}} - \epsilon\\
        \norm{q_{\textrm{max}} - \epsilon - q_i} & \text{else }
    \end{cases}.
\end{equation}

\section{Ablation Study}  \label{app:ablations}

In this section, we study different algorithmic aspects, horizons and number of paralleled plans, and also provide an ablation on polytope choices.

\subsection{Algorithmic ablations}

We study the empirical convergence and the parallelization quality over Sinkhorn Steps between the main algorithm MPOT and its variants: MPOT-NoRot - no random rotation applied on the polytopes, and MPOT-NoAnnealing - annealing option is disabled. This ablation study is conducted on the point-mass experiment due to the extremely narrow passages and non-smooth, non-convex objective function, contrasting the performance difference between algorithmic options.

The performance gap between MPOT-NoRot and the others in~\cref{fig:point-mass_ablation} is significant. The absence of random rotation on waypoint polytopes leads to biases in the planning cost approximation due to the fixed \textit{probe set} $H^P$. This approximation bias from non-random rotation becomes more prominent in higher-dimensional tasks due to the sparse search direction set. This experiment result confirms the robustness gained from the random rotation for arbitrary objective function conditions. 

Between MPOT and MPOT-NoAnnealing, the performance gap depends on the context. MPOT has a faster convergence rate due to annealing the step and probe radius, which leads to a better approximation of local minima. However, it requires careful tuning of the annealing rate to avoid premature convergence and missing better local minima. MPOT-NoAnnealing converges slower and thus takes more time, but eventually discovers more successful local minima (nearly 80\%) than MPOT with annealing (cf.~\cref{tab:benchmark}). This is a trade-off between planning efficiency and parallelization quality with the annealing option.

\begin{figure}[tb]
  \centering
    \includegraphics[width=\linewidth]{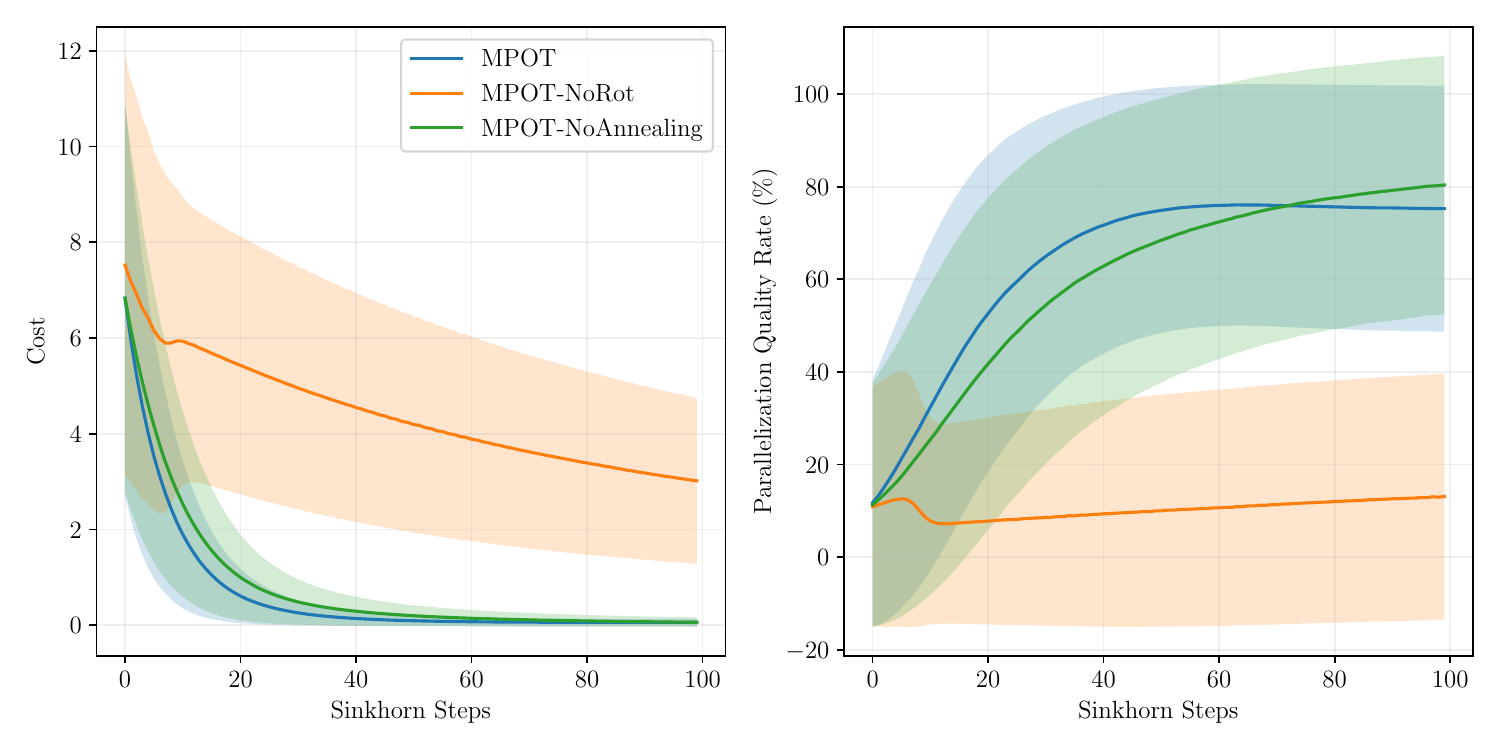}
    \vspace{-0.4cm}
  \caption{Ablation study on algorithmic choices in the point-mass environment. All planners are terminated at $100$ Sinkhorn Steps. All statistics are evaluated on $1000$ tasks as described in~\cref{sec:bench}.}
\label{fig:point-mass_ablation}
\centering
   \includegraphics[width=\linewidth]{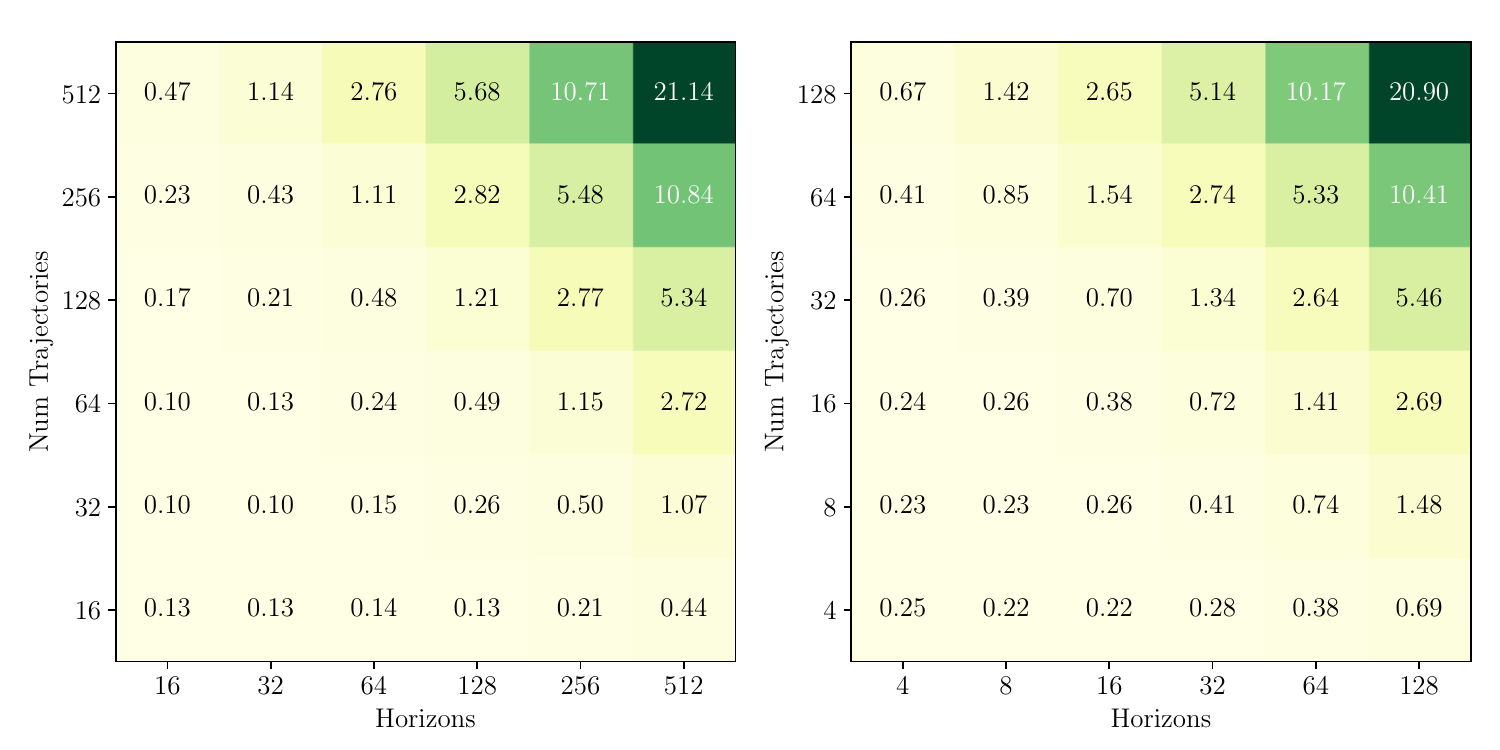}
   \vspace{-0.4cm}
  \caption{Planning time heatmap in seconds while varying the horizons and number of paralleled trajectories on both the point-mass (left) and the Panda (right) environments.}
\label{fig:time_heatmap}
\end{figure}

\subsection{Flattening ablations}

In both the point-mass and the Panda environments, we experiment with different horizons $T$ and the number of parallel plans $N_p$. For each $(T, N_p)$ combination, we tune MPOT to achieve a satisfactory success rate and then measure the planning time until convergence, as shown in~\cref{fig:time_heatmap}. The planning time heatmap highlights the batch computation property of MPOT, resulting in a nearly symmetric pattern. Despite long horizons and large batch trajectories, the planning time remains reasonable (under a minute) and can be run efficiently on a single GPU without excessive memory usage, making it suitable, for example, for collecting datasets for learning neural network models.

\subsection{Polytope ablations}

\begin{table}[tb]
\footnotesize
\centering
\begin{center}
\captionof{table}{Polytope ablation study on the Panda environment. All statistics are evaluated on $500$ tasks as described in~\cref{sec:bench}.}
\label{tab:polytope_ablation}
\adjustbox{max width=\linewidth}{
\begin{tabular}{l
@{\extracolsep{\fill}}
c c c c c c}
\toprule
\phantom{Var.} & {T$[s]$} & {SUC$[\%]$} & {GOOD$[\%]$}  & {S} & {PL}\\
\midrule
MPOT-Random\;\;
& $2.5\,\pm0.0$ & $70.1\,\pm23.7$ & $58.3\,\pm44.3$ & $0.03\,\pm0.01$ & $4.7\pm1.2$\\
MPOT-Simplex\;\;
& $\mathbf{0.5}\,\pm0.0$ & $65.8\,\pm24.5$ & $52.1\,\pm45.3$ & $0.01\,\pm0.01$ & $4.6\pm1.1$\\
MPOT-Orthoplex\;\;
& $0.8\,\pm0.1$ & $\mathbf{71.6}\,\pm23.2$ & $\mathbf{60.2}\,\pm44.4$ & $0.01\,\pm0.01$ & $4.6\pm0.9$\\
\bottomrule
\end{tabular}}%
\end{center}
\end{table}

In~\cref{tab:polytope_ablation}, we compare the performance of MPOT-Orthoplex (i.e., MPOT in the Panda experiments) with its variants: MPOT-Simplex using the $d$-simplex vertices as search direction set $D^P$, and MPOT-Random, i.e., not using any polytope structure. For MPOT-Random, we generate $100$ points on the $13$-sphere ($d=14$ for the Panda environment) as the search direction set $D$ for each waypoint at each Sinkhorn Step, using the Marsaglia method~\cite{marsaglia1972choosing}. As expected, since the $d$-simplex has fewer vertices than the $d$-orthoplex, MPOT-Simplex has better planning time but sacrifices some success rate due to a more sparse approximation. MPOT-Random, while achieving a comparable success rate, performs even worse in both planning time and smoothness criteria. We also observe that increasing the number of sampled points on the sphere improves the smoothness marginally. However, increasing the sample points worsens the planning time in general, inducing more matrix columns and instabilities in the already large dimension cost matrix (cf.~\cref{app:log_sinkhorn_knopp}) of the \gls{ot} problem. This ablation study highlights the significance of the polytope structure for the Sinkhorn Step in high-dimensional settings.

\subsection{Smooth gradient approximation ablations}

We conduct an ablation on the gradient approximation of Sinkhorn Step w.r.t. different important hyperparameter settings for sanity check of Sinkhorn Step's optimization behavior on a smooth objective function. We choose the Styblinski-Tang function (cf.~\cref{fig:st}) in 10D as the smooth objective function due to its variable dimension and multi-modality for non-convex optimization benchmark~\cite{simulationlib}. We target the most important hyperparameters of \textit{polytope type $P$}, and entropic regularization scalar $\lambda$. These parameters sensitively affect the Sinkhorn Step's optimization performance. We set the other important hyperparameters of \textit{step size} and \textit{probe size} $\alpha = \beta = 0.1$ to be constant, the number of probing points per vertices to be 5 and turn off the annealing option for all optimization runs. The cosine similarity is defined for each particle $\vx_i \in X$ as follows:

\begin{equation}
    \label{eq:cosim}
    \textrm{CS}_i = \frac{\vs(\vx_i) \cdot (-\nabla f(\vx_i))}{\norm{\vs(\vx_i)}\norm{-\nabla f(\vx_i)}}
\end{equation}

\begin{figure}[tb]
\vspace{-0.3cm}
\centering
  \begin{minipage}[b]{0.49\textwidth}
  \centering
    \includegraphics[width=\textwidth]{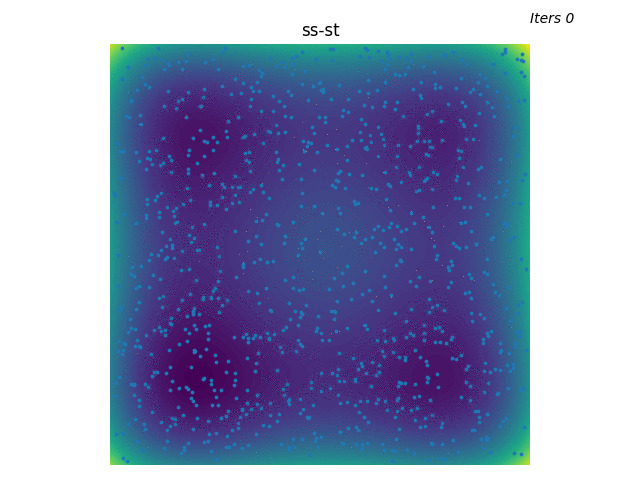}
  \end{minipage}
  \begin{minipage}[b]{0.49\textwidth}
  \centering
    \includegraphics[width=\textwidth]{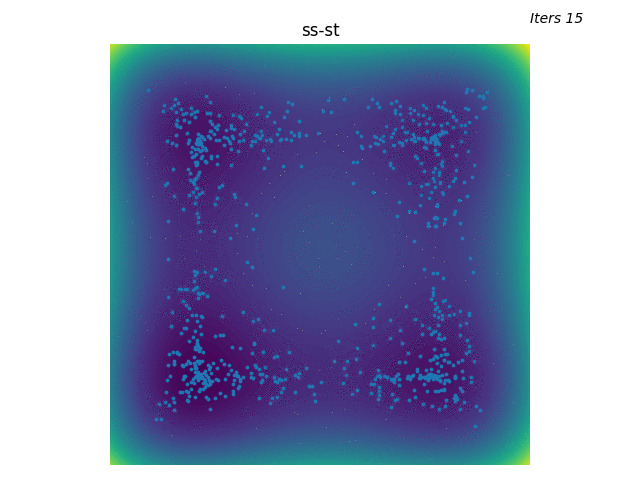}
  \end{minipage}
  \begin{minipage}[b]{0.49\textwidth}
  \centering
    \includegraphics[width=\textwidth]{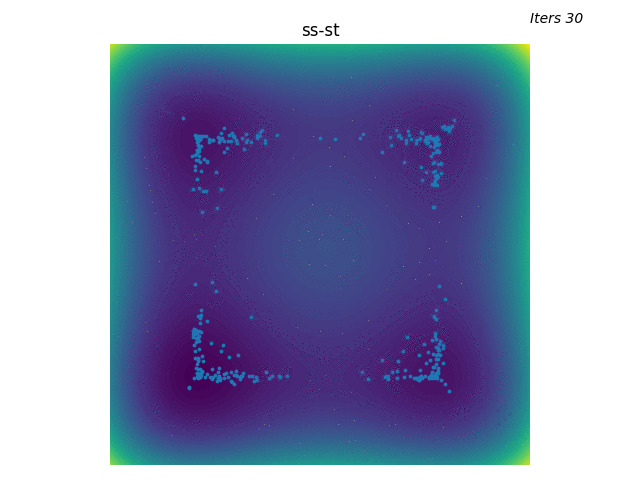}
  \end{minipage}
  \begin{minipage}[b]{0.49\textwidth}
  \centering
    \includegraphics[width=\textwidth]{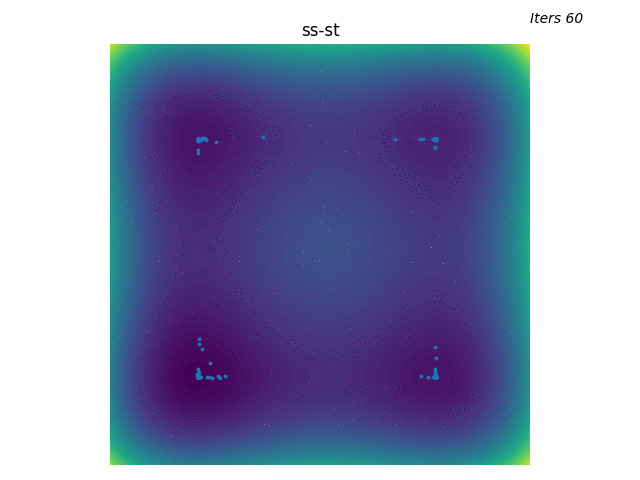}
  \end{minipage}
  \caption{An example optimization run of 1000 points on the Styblinski-Tang function with Sinkhorn Step. The points are uniformly sampled at the start of optimization. This plot shows the projected optimization run in the first two dimensions.}
\label{fig:st}
\end{figure}

\begin{figure}[tb]
\centering
\includegraphics[width=\linewidth]{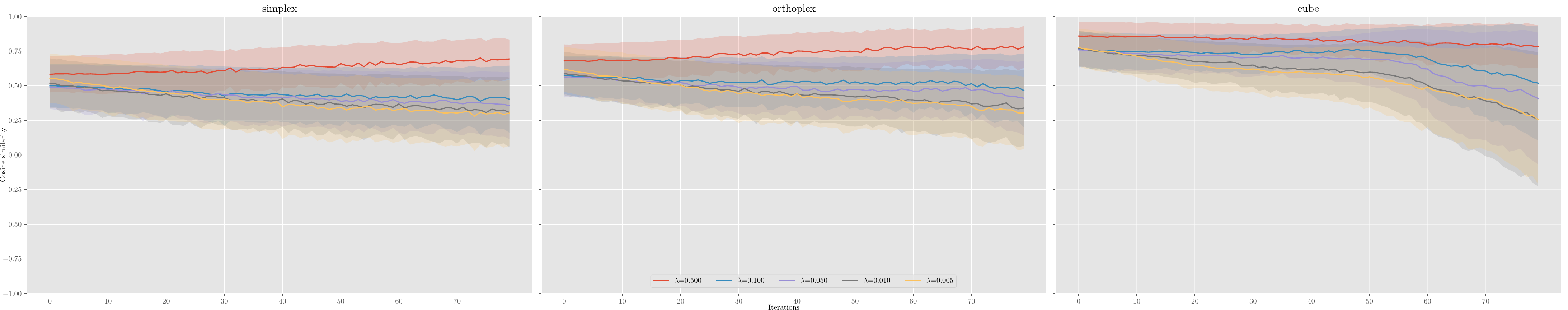}
\caption{Ablation study on gradient approximation with cosine similarity between Sinkhorn Step directions and true gradients. We choose the Styblinski-Tang function as the test objective function. Each curve represents an optimization run of 1000 points w.r.t to entropic regularization scalar $\lambda$ and polytope choice (corresponding to each column), where each iteration shows the mean and variance of cosine similarity of points w.r.t their true gradients. We conduct 50 seeds for each curve, where for all seeds we concatenate the cosine similarities of all optimizing points across the seeds at each iteration.}
\label{fig:grad_approx}
\end{figure}

Regarding this smooth objective, we observe the gradient approximation quality is consistent with~\cref{lemma:polytope_bound}, with increasing cosine similarities for all curves from left to right column (cf.~\cref{fig:grad_approx}). However, regarding entropic regularization scalar $\lambda$, we observe higher cosine similarity and lower curve variance for larger $\lambda$. Interestingly, this means higher $\lambda$ induces both computational benefit solving entropic \gls{ot}~\cite{cuturi2013sinkhorn} and higher \textit{entropic smoothing bias}~\cite{janati2020debiased}, where the latter regularizes the gradient approximation directions, while it contrarily blurs the result barycenters in the barycenter problem. Notably, this Sinkhorn Step smoothing effect is more necessary in the case of $P = \textrm{cube}$ toward the end of optimization (i.e., near the local minima/fixed points), where the gradients have small magnitudes and may be noisy while the Sinkhorn Step size is constant. High $\lambda=0.5$ (red curve) keeps high cosine similarity toward fixed points (cf.~\cref{fig:grad_approx}), while the lower/sharper $\lambda$ exhibits degradation due to noisy random rotated $10$-cube with constant-size having $2^{10}$ vertices.

Note that the conclusion drawn from this ablation may not apply to the motion planning application in the main paper since we are evaluating the Sinkhorn Step on smooth objective functions, while the motion planning costs may have an ill-formed cost landscape. Further investigation of (sub)-gradient approximation in various objective function conditions is very interesting for future work.


\end{document}